%% file: main.tex
\documentclass{article} 

\usepackage[table]{xcolor}  
\usepackage{iclr2026_conference,times}

\iclrfinalcopy

\input{math_commands.tex}

\usepackage{hyperref}
\usepackage{url}

\usepackage{amsmath}
\usepackage{amssymb}
\usepackage{mathtools}
\usepackage{amsthm}

\usepackage{microtype}
\usepackage{graphicx}
\usepackage{booktabs} 

\usepackage{algorithm}
\usepackage{algpseudocode}
\usepackage{graphicx} 
\usepackage{todonotes}
\usepackage{bm}

\usepackage{xcolor}
\usepackage{dsfont}
\usepackage{nicefrac}
\usepackage{tcolorbox}
\usepackage{tabularx}
\usepackage{pifont}
\usepackage[inline]{enumitem}
\usepackage{multirow}
\usepackage{caption}
\usepackage{wrapfig}
\usepackage[export]{adjustbox}
\usepackage[list=true]{subcaption}
\usepackage{colortbl}
\definecolor{lightgray}{RGB}{242, 242, 242}
\usepackage{float}
\usepackage[capitalize,noabbrev]{cleveref}
\crefname{assumption}{Assumption}{Assumptions}
\crefname{algorithm}{Algorithm}{Algorithms}

\usepackage{longtable}
\usepackage[title]{appendix}

\usepackage{tcolorbox}
\usepackage{xspace}
\DeclareRobustCommand{\ie}{i.e.,\@\xspace}
  
\DeclareRobustCommand{\eg}{e.g.,\@\xspace}
\DeclareRobustCommand{\wrt}{w.r.t.\@\xspace}
\usepackage[capitalize]{cleveref}
\usepackage{tikz}
\usepackage{tikz-cd}
\usetikzlibrary{shapes,backgrounds}
\usepackage{array}
\usepackage{enumitem}

\usepackage{wrapfig}
\usepackage[export]{adjustbox}
\usepackage[list=true]{subcaption}
\usepackage[normalem]{ulem}
\usepackage{rotating}   
\usepackage{cleveref}
\usepackage{adjustbox}
\usepackage{twoopt}

\usepackage[T1]{fontenc}
\usepackage[utf8]{inputenc}
\usepackage{amsmath,amssymb}
\usepackage{booktabs}      
\usepackage{multirow}        

\usepackage{acronym}		

\usepackage{amsthm}
\usepackage{thmtools, thm-restate}

\declaretheorem[numberwithin=section]{assumption}
\declaretheorem[]{proof sketch}

\usepackage{amsfonts}[mathscr]
\usepackage{amssymb}
\usepackage{amsmath}
\DeclareMathOperator*{\EV}{\mathbb{E}}

\newcommand{\A}{\mathcal{A}}
\newcommand{\Lfunc}{\mathcal{L}}

\newcommand{\X}{\mathcal{X}}

\newcommand{\F}{\mathcal{F}}

\newcommand{\D}{\mathcal{D}}

\newcommand{\Q}{\mathcal{Q}}
\newcommand{\G}{\mathcal{G}}
\newcommand{\mypar}[1]{\textbf{#1.}}

\newcommand{\entropy}{\mathcal{H}}
\newcommand{\Operator}{\mathcal{O}}
\newcommand{\der}{\mathrm{d}}

\newcommand{\RewardGuidedFineTuningSolver}{\textsc{\small{RewardGuidedFineTuningSolver}}\xspace}
\newcommand{\RewardGuidedFineTuningSolverRunningCosts}{\textsc{\small{RewardGuidedFineTuningSolverRunningCosts}}\xspace}
\newcommand{\noise}{U}
\newcommand{\bias}{b}
\newcommand{\hist}{\mathcal{T}}

\newcommand{\step}{\gamma}

\usepackage{pifont}
\usepackage{makecell}
\definecolor{darkgreen}{RGB}{0,100,0}
\definecolor{darkorange}{RGB}{255,140,0}

\newcommand{\AlgNameLong}{Reward-Guided Flow Merging\xspace}
\newcommand{\AlgNameShort}{\textsc{\small{RFM}}\xspace}
\newcommand{\AlgNameDef}{\textbf{R}eward-Guided \textbf{F}low \textbf{M}erging  (\textsc{\small{RFM}}\xspace)}

\definecolor{myviolet}{rgb}{0.6, 0.4, 0.8}
\definecolor{mygreen}{rgb}{0.0, 0.5, 0.0}

\newcommand{\newmacro}[2]{\newcommand{#1}{{#2}}}		

\newcommand{\dual}{h}

\newcommand{\ctime}{t}

\newmacro{\temp}{\eta}		
\newmacro{\points}{\mathcal{Z}}		
\newmacro{\intpoints}{\points^{\circ}}		
\newmacro{\point}{\dual}		
\newmacro{\pointalt}{\alt\point}		

\newmacro{\ctimealt}{s}		
\newmacro{\cstart}{0}		

\newmacro{\horizon}{T}		

\newmacro{\vecfield}{V}		

\newmacro{\signal}{V}		
\newmacro{\error}{W}		
\newmacro{\brown}{W}		

\newmacro{\dstate}{Y}		

\newmacro{\flowmap}{\Theta}		
\newcommandtwoopt{\flow}[2][\ctime][\point]{\flowmap_{#1}(#2)}
\newmacro{\minmax}{\Phi}		

\newmacro{\minvar}{x}		
\newmacro{\minvaralt}{\alt x}		
\newmacro{\minvars}{\mathcal{X}}		

\newmacro{\maxvar}{y}		
\newmacro{\maxvaralt}{\alt y}		
\newmacro{\maxvars}{\mathcal{Y}}		

\newmacro{\minsol}{\sol[\minvar]}		
\newmacro{\maxsol}{\sol[\maxvar]}		

\newcommand{\sol}[1][\point]{#1^{\ast}}		

\newmacro{\set}{\mathcal{S}}		

\newmacro{\open}{\mathcal{U}}		
\newmacro{\closed}{\mathcal{C}}		
\newmacro{\cpt}{\mathcal{K}}		
\newmacro{\nhd}{\mathcal{U}}		

\newacro{APT}{asymptotic pseudotrajectory}
\newacroplural{APT}[APTs]{asymptotic pseudotrajectories}
\newacro{GD}{gradient dynamics}
\newacro{GF}{gradient flow}
\newacro{ICT}{internally chain-transitive}
\newacro{MDS}{martingale difference sequence}
\newacro{NE}{Nash equilibrium}
\newacroplural{NE}[NE]{Nash equilibria}
\newacro{ODE}{ordinary differential equation}
\newacro{SA}{stochastic approximation}
\newacro{SFO}{stochastic first-order oracle}
\newacro{SG}{stochastic gradient}
\newacro{SP}{saddle-point}
\newacro{WAC}{weak asymptotic coercivity}

\newacro{AH}{Arrow\textendash Hurwicz}
\newacro{BDG}{Burkholder\textendash Davis\textendash Gundy}
\newacro{ConO}{consensus optimization}
\newacro{RM}{Robbins\textendash Monro}
\newacro{KW}{Kiefer\textendash Wolfowitz}
\newacro{GDA}{gradient descent/ascent}
\newacro{SGA}{symplectic gradient adjustment}
\newacro{SGD}{stochastic gradient descent}
\newacro{SGDA}{stochastic gradient descent/ascent}
\newacro{SPSA}{simultaneous perturbation stochastic approximation}
\newacro{ASGDA}[alt-SGDA]{alternating stochastic gradient descent/ascent}
\newacro{SEG}{stochastic extra-gradient}
\newacro{EG}{extra-gradient}
\newacro{PEG}{Popov's extra-gradient}
\newacro{RG}{reflected gradient}
\newacro{OG}{optimistic gradient}
\newacro{PPM}{proximal point method}

\newacro{GAN}{generative adversarial network}
\newacro{NN}{neural network}
\newacro{FTRL}{``follow the regularized leader''}
\newacro{CGD}{Competitive Gradient Descent}
\newacro{wp1}[w.p.$1$]{with probability $1$}

\definecolor{pastelblueold}{RGB}{56,146,236}
\definecolor{pastelblue}{RGB}{43,115,187}
\definecolor{pastelgreen}{RGB}{63,159,95}

\hypersetup{
  colorlinks=true,
  citecolor=pastelgreen,
  linkcolor=pastelblue,    
  urlcolor=pastelblue      
}

\title{A Unified Density Operator View of \\  Flow Control and Merging}

\author{
Riccardo {De Santi}\textsuperscript{$1,2$}, Malte Franke\textsuperscript{$1$}, Ya-Ping Hsieh\textsuperscript{$1$}, Andreas Krause\textsuperscript{$1,2$} \\
\textsuperscript{$1$}ETH Zürich, \textsuperscript{$2$}ETH AI Center \\
\texttt{\{rdesanti,krausea\}@ethz.ch}, \texttt{\{malte.franke,yaping.hsieh\}@inf.ethz.ch}
}

\begin{document}

\maketitle

\begin{abstract}
\input{sections/abstract}
\end{abstract}

\addtocontents{toc}{\protect\setcounter{tocdepth}{-1}}

\section{Introduction} 
\label{sec:introduction}
\input{sections/introduction}

\section{Background and Notation} 
\label{sec:background}
\input{sections/background}

\section{Reward-Guided Flow Merging via Implicit Density Operators} 
\label{sec:problem_setting}
\input{sections/problem_setting}

\section{Algorithm: \AlgNameLong} 
\label{sec:algorithm}
\input{sections/algorithm}

\section{Scalable Intersection via Flow Process Optimization} 
\label{sec:running_cost_KL}
\input{sections/running_cost_KL}


\section{Guarantees for \AlgNameLong} 
\label{sec:theory}
\input{sections/theory}

\section{Experimental Evaluation} 
\label{sec:experiments}
\input{sections/experiments}

\section{Related Work} 
\label{sec:related_works}
\input{sections/related_works}

\section{Conclusion} 
\label{sec:conclusions}
\input{sections/conclusions}

\section*{Acknowledgements}
This publication was made possible by the ETH AI Center doctoral fellowship to Riccardo De Santi. The project has received funding from the Swiss
National Science Foundation under NCCR Catalysis grant number 180544 and NCCR Automation grant agreement 51NF40 180545. 

\bibliography{biblio}
\bibliographystyle{iclr2026_conference}

\newpage
\appendix
\section{Appendix}
\tableofcontents
\newpage
\addtocontents{toc}{\protect\setcounter{tocdepth}{2}}

\section{Proofs for Section \ref{sec:theory}}
\label{sec:app_theory2}
\input{appendices/app_theory2}

\newpage

\section{Derivations of Gradients of First Variation}
\input{appendices/gradients_derivations}
\label{sec:gradients_derivations}
\newpage

\section{Proof of Proposition \ref{proposition:union_operator_mixture}}
\input{appendices/proof_proposition_union}
\label{sec:proof_proposition_union}
\newpage

\section{\AlgNameDef $ $ Implementation}
\input{appendices/alg_implementation}

\label{sec:app_alg_implementation}
\newpage

\section{\AlgNameDef: $ $ Computational Complexity, Cost, and Approximate Fine-Tuning Oracles}
\input{appendices/app_computational_cost}
\label{sec:app_computational_cost}
\newpage

\section{Experimental Details}
\input{appendices/experimental_details}
\label{sec:experimental_details}
\newpage

\end{document}

%% file: math_commands.tex
\usepackage{amsmath,amsfonts,bm}

\def\eqref#1{equation~\ref{#1}}

\def\1{\bm{1}}

\def\mF{{\bm{F}}}

\def\mP{{\bm{P}}}

\DeclareMathAlphabet{\mathsfit}{\encodingdefault}{\sfdefault}{m}{sl}
\SetMathAlphabet{\mathsfit}{bold}{\encodingdefault}{\sfdefault}{bx}{n}

\newcommand{\R}{\mathbb{R}}

\newcommand{\KL}{D_{\mathrm{KL}}}

\DeclareMathOperator*{\argmax}{arg\,max}
\DeclareMathOperator*{\argmin}{arg\,min}

%% file: sections/abstract.tex
\looseness -1 Recent progress in large-scale flow and diffusion models raised two fundamental algorithmic challenges: $(i)$ control-based reward adaptation of pre-trained flows, and $(ii)$ integration of multiple models, i.e., flow merging. While current approaches address them separately, we introduce a unifying probability-space framework that subsumes both as limit cases, and enables \emph{reward-guided flow merging}, allowing principled, task-aware combination of multiple pre-trained flows (e.g., merging priors while maximizing drug-discovery utilities). Our formulation renders possible to express a rich family of \emph{operators over generative models densities}, including intersection (e.g., to enforce safety), union (e.g., to compose diverse models), interpolation (e.g., for discovery), their reward-guided counterparts, as well as complex logical expressions via \emph{generative circuits}. Next, we introduce \AlgNameDef, a mirror-descent scheme that reduces reward-guided flow merging to a sequence of standard fine-tuning problems. Then, we provide first-of-their-kind theoretical guarantees for reward-guided and \emph{pure} flow merging via \AlgNameShort. Ultimately, we showcase the capabilities of the proposed method on illustrative settings providing visually interpretable insights, and apply our method to high-dimensional de-novo molecular design and low-energy conformer generation.

%% file: sections/introduction.tex
\looseness -1 Large-scale generative modeling has recently progressed at an unprecedented pace, with flow~\citep{lipman2022flow, lipman2024flow} and diffusion models~\citep{sohl2015deep, song2019generative, ho2020denoising} delivering high-fidelity samples in chemistry~\citep{hoogeboom2022equivariant}, biology~\citep{corso2022diffdock}, and robotics~\citep{chi2024diffusionpolicyvisuomotorpolicy}. However, adoption in real-world applications like scientific discovery led to two fundamental algorithmic challenges: $(i)$ reward-guided fine-tuning, \ie adapting pre-trained flows to maximize downstream utilities (\eg binding affinity)~\citep[\eg][]{domingo2024adjoint, uehara2024feedback, santi2025flow}, and $(ii)$ flow merging (FM): integrating multiple pre-trained models into one~\citep{song2023consistency, ma2025decouple}, \eg to incorporate safety constraints~\citep{dai2023safe}, or unify diverse priors~\citep{ma2025decouple}. 
Crucially, so far these two problems have been treated via fundamentally distinct formulations and methods. On the contrary, in this work we ask:
\begin{center}
    \emph{Can control and merging be cast in a single framework\\ enabling task-aware merging of multiple flows?}
\end{center} 
\looseness -1 Intuitively, this would allow to merge flows in a task-aware manner, as well as adapt flows to optimize downstream rewards while leveraging information from multiple prior models. Answering this would contribute to the algorithmic-theoretical foundations of \emph{flow adaptation}.
\paragraph{Our approach}
\looseness -1 To address this challenge, we first introduce a probability-space optimization framework (see Fig. \ref{fig:prob_opt_viewpoint}) that recovers reward-guided fine-tuning and \emph{pure} model merging as limit cases, and provably enables \emph{reward-guided flow merging} (Sec. \ref{sec:problem_setting}). Our formulation allows to express a rich family of operators over generative models, covering practical needs such as enforcing safety (\eg via intersection), composing diverse models (\eg via union), and discovery in data-scarce regions (\eg via interpolation). However, these operators are expressed via non-linear functionals that cannot be optimized via classic RL or control schemes, as shown by \citet{santi2025flow}. To overcome this challenge, we introduce \AlgNameDef, a mirror descent (MD)~\citep{nemirovskij1983problem} scheme that solves reward-guided and pure flow merging via a sequential adaptation process implementable via established fine-tuning methods~\citep[\eg][]{domingo2024adjoint, uehara2024feedback} (Sec. \ref{sec:algorithm}). Next, we extend the algorithm proposed, to operate on the space of entire flow processes, enabling scalable and stable computation of the intersection operator (Sec. \ref{sec:running_cost_KL}). We provide a rigorous convergence analysis of \AlgNameShort, yielding first-of-its-kind theoretical guarantees for reward-guided and pure flow merging (Sec. \ref{sec:theory}). Ultimately, we showcase our method's capabilities on illustrative settings, as well as on high-dimensional molecular design and conformer generation tasks (Sec. \ref{sec:experiments}).

\paragraph{{Our contributions}} To sum up, in this work we contribute

\begin{itemize}[noitemsep,topsep=0pt,parsep=0pt,partopsep=0pt,leftmargin=*]
    \item \looseness -1 A formalization of \emph{reward-guided flow merging} via density operators, which unifies and generalizes reward-guided adaptation and flow merging (Sec. \ref{sec:problem_setting}).
    \item \looseness -1 {\em \AlgNameDef}, a probability-space optimization algorithm that provably solves reward-guided flow merging, and pure flow merging problems (Sec. \ref{sec:algorithm}), and a stability-enhancing extension for flow intersection, implementing a scalable mirror-descent scheme over the space of flow processes (Sec. \ref{sec:running_cost_KL}).
    \item \looseness -1 Theoretical convergence guarantees for the proposed algorithms, leveraging recent insights on mirror flows, and a novel technical result of independent interest. (Sec. \ref{sec:theory}).
    \item \looseness -1 An experimental evaluation of \AlgNameShort showcasing its practical relevance on both synthetic, yet illustrative settings, as well as on scientific discovery tasks, namely molecular design and conformer generation. (Sec. \ref{sec:experiments}).
\end{itemize}

%% file: sections/background.tex
\looseness -1 We denote the set of Borel probability measures on a set $\X$ with $\mP(\X)$, and the set of functionals over $\mP(\X)$ as $\mF(\X)$. 

\mypar{Generative Flow Models}
Generative models aim to approximately sample novel data points from a data distribution $p_{data}$. Flow models tackle this problem by transforming samples $X_0 = x_0$ from a source distribution $p_0$ into samples $X_1=x_1$ from the target distribution $p_{data}$~\citep{lipman2024flow, farebrother2025temporal}. Formally, a \emph{flow} is a time-dependent map $\psi: [0,1]\times \R^d \to \R$ such that $\psi: (t,x) \to \psi_t(x)$. A \emph{generative flow model} is a continuous-time Markov process $\{X_t\}_{0 \leq t \leq 1}$ obtained by applying a flow $\psi_t$ to $X_0 \sim p_0$ as $X_t = \psi_t(X_0)$, $ t \in [0,1]$, such that $X_1 = \psi_1(X_0) \sim p_{data}$. In particular, the flow $\psi$ can be defined by a \emph{velocity field} $u: [0,1] \times \R^d \to \R^d$, which is a vector field related to $\psi$ via the following ordinary differential equation (ODE), typically referred to as \emph{flow ODE}:
\begin{equation}
    \frac{\der}{\der t} \psi_t(x) = u_t(\psi_t(x)) \label{eq:flow_diff_eq} 
\end{equation} 
with initial condition $\psi_0(x) = 0$. 
A flow model $X_t = \psi_t(X_0)$ induces a probability path of \emph{marginal densities} $p = \{p_t\}_{0 \leq t \leq 1}$ such that at time $t$ we have that $X_t \sim p_t$. 
We denote by $p^u$ the probability path of marginal densities induced by the velocity field $u$. Flow matching (FM)~\citep{lipman2024flow} can estimate a velocity field $u^{\theta}$ s.t. the induced marginal densities $p^{u_\theta}$ satisfy $p^{u_\theta}_0 = p_0$ and $p^{u_\theta}_1 = p_{data}$, where $p_0$ denotes the source distribution, and $p_{data}$ the target data distribution. Typically FM are rendered tractable by defining $p^u_t$ as the marginal of a conditional density $p^u_t(\cdot | x_0, x_1)$, \eg:
\begin{equation}
    X_t \; |\; X_0, X_1 = \kappa_t X_0 + \omega_t X_1
\end{equation}
\looseness -1 where $\kappa_0 = \omega_1 = 1$ and $\kappa_1 = \omega_0 = 0$ (e.g. $\kappa_t = 1- t$ and $\omega_t = t$). Then $u^\theta$ can be learned by regressing onto the conditional velocity field $u(\cdot | x_1)$ \citep{lipman2022flow}. 
As diffusion models~\citep{song2019generative} (DMs) admit an equivalent ODE formulation~\citep[Ch. 10]{lipman2024flow}, our contributions extend directly to DMs.

\mypar{Continuous-time Reinforcement Learning}
\looseness -1 We formulate finite-horizon continuous-time RL as a specific class of optimal control problems \citep{wang2020reinforcement, zhao2024scores}. Given a state space $\X$ and an action space $\A$, we consider the transition dynamics governed by the following ODE:
\begin{equation} 
      \frac{\der}{\der t} \psi_t(x) = a_t(\psi_t(x)) \label{eqn_continuous_RL} 
\end{equation}
\looseness -1 where $a_t \in \A$ is an action. We consider a state space $\X \coloneqq \R^d \times [0,1]$, and denote by (Markovian) deterministic policy a function $\pi_t(X_t) \coloneqq \pi(X_t, t) \in \A$ mapping a state $(x,t) \in \X$ to an action $a \in \A$ such that $a_t = \pi(X_t, t)$, and denote with $p_t^\pi$ the marginal density at time $t$ induced by policy $\pi$. 

\mypar{Pre-trained Flow Models as an RL policy}
\looseness -1 A pre-trained flow model with velocity field $u^{pre}$ can be interpreted as an action process $a^{pre}_t \coloneqq u^{pre}(X_t, t)$, where $a_t^{pre}$ is determined by a continuous-time RL policy via $a_t^{pre} = \pi^{pre}(X_t, t)$~\citep{de2025provable}. Therefore, we can express the flow ODE induced by a pre-trained flow model by replacing $a_t$ with $a^{pre}$ in Eq. \eqref{eqn_continuous_RL}, and denote the pre-trained model by its policy $\pi^{pre}$, which induces a density $p^{pre}_1 \coloneqq p^{\pi^{pre}}_1$ approximating $p_{data}$.

%% file: sections/problem_setting.tex
\begin{figure*}[t]
    \centering
    \begin{subfigure}{0.59\textwidth} 
        \centering
        \includegraphics[width=\textwidth, keepaspectratio]{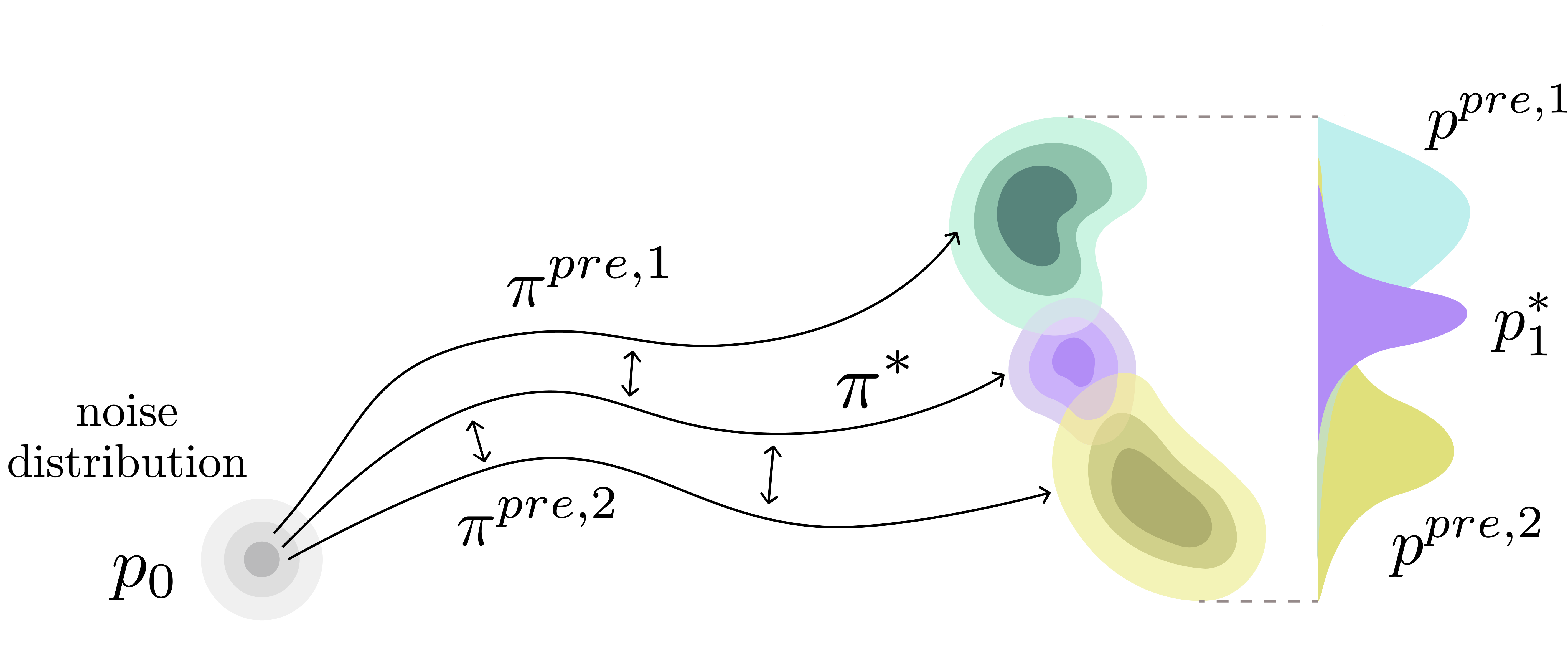}
        \caption{Reward-Guided Flow Merging}
        \label{fig:process_drawing}
    \end{subfigure}%
    \hspace{10pt}
    \begin{subfigure}{0.37\textwidth} 
        \centering
        \raisebox{-1ex}[0pt][0pt]{
        \includegraphics[width=0.62\textwidth, keepaspectratio]{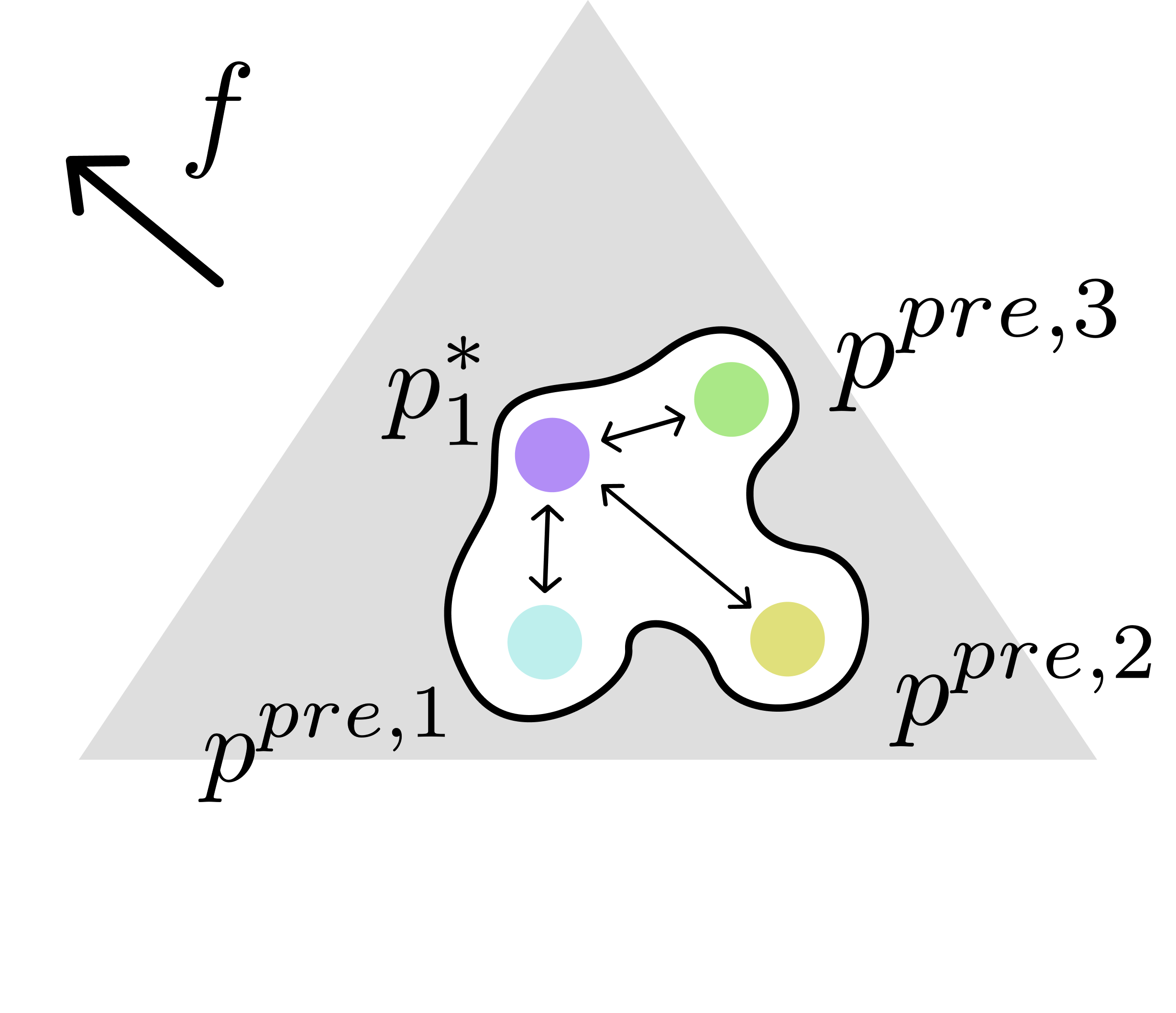}
        }
        \caption{Probability-Space Opt. Viewpoint}
        \label{fig:prob_opt_viewpoint}
    \end{subfigure}
    \caption{\looseness -1 (\ref{fig:process_drawing}) Pre-trained and fine-tuned policies inducing $\{p_1^{pre,i}\}_{i=1}^n$ and optimal density $p_1^*$ computed via flow merging, i.e., subcase of Problem \ref{eq:reward_guided_flow_merging_problem} where $f$ is disregarded.  (\ref{fig:prob_opt_viewpoint}) Probability-space optimization viewpoint on reward-guided flow merging, as in Problem \ref{eq:reward_guided_flow_merging_problem}.}
    \label{fig:joint_top_figures} 
\end{figure*}
\looseness -1 In this section, we introduce the general problem of \emph{reward-guided flow merging} via \emph{density operators}. Formally, we wish to implement an operator $\mathcal{O}$: $\Pi \times \ldots \times \Pi \to \Pi$ that, given pre-trained generative flow models $\{\pi^{pre,i}\}_{i\in[n]}$, returns a merged flow $\pi^*$ inducing an ODE:
\begin{equation} 
      \frac{\der}{\der t} \psi_t(x) = a^*_t(\psi_t(x)) \quad  \text{with} \quad a_t^* = \pi^*(x,t), \label{eq:controlled_ODE}
\end{equation}
\looseness -1 such that it controllably merges prior information within the $n$ pre-trained generative models, while potentially steering its density $p_1^* \coloneqq p_1^{\pi^*}$ towards a high-reward region according to a given scalar reward function $f(x): \X \to \R$. We implement such operators by fine-tuning an initial flow $\pi^{init} \in \{\pi^{pre,i}\}_{i\in[n]}$ according to the following probability-space optimization problem (see Fig. \ref{fig:prob_opt_viewpoint}).
\begin{tcolorbox}[colframe=white!, top=2pt,left=2pt,right=2pt,bottom=2pt]
\begin{center}
\textbf{Reward-Guided FM via Density Operator}
\begin{align}
    \pi^* \in \argmax_{\pi : p_0^* = p_0^{pre}}\;  \EV_{x\sim p_1^\pi} \left[ f(x) \right] - \sum_{i=1}^n \alpha_i \D_i( p_1^{\pi}  \, \| \,  p_1^{pre,i})  \label{eq:reward_guided_flow_merging_problem}
\end{align}
\end{center}
\end{tcolorbox}
\looseness -1  Here, each $D_i$ is an arbitrary divergence, $\alpha_i > 0$ are model-specific weights, and $p_0^\pi = p_0^{pre}$ enforces that the marginal density at $t=0$ must match the pre-trained model marginal. 
This formulation recovers reward-guided fine-tuning~\citep[\eg][]{domingo2024adjoint} when $n=1$ and $\D_1 =  D_{KL}$, and provides a formal framework for \emph{pure} flow merging~\citep[\eg][]{poole2022dreamfusion, song2023consistency} with interpretable objectives, when the reward $f$ is constant (\eg $f(x) = 0 \; \forall x \in \X$).
\looseness -1  In this case, Eq. \ref{eq:reward_guided_flow_merging_problem} formalizes flow merging as computing a flow $\pi^*$ that minimizes a weighted sum of divergences to the priors $\{\pi^{pre,i}\}_{i\in[n]}$. Varying the divergences $\{D_i\}_{i\in [n]}$ yields different merging strategies.
\paragraph{In-Distribution Flow Merging.}
\looseness -1 Given pre-trained flow models $\{\pi^{pre,i}\}_{i\in[n]}$, we denote by \emph{in-distribution} merging when the merged model generates samples from regions with sufficient prior density. Practically relevant instances include the \emph{intersection operator} $\mathcal{O}_{\land}$ (\ie a logical AND), and the \emph{union operator} $\mathcal{O}_{\lor}$ (\ie a logical OR). Formally, these operators can be defined as follows:

\begin{minipage}{0.49\textwidth}
\begin{tcolorbox}[colframe=white, colback=gray!10, top=2pt,left=2pt,right=2pt,bottom=2pt]
\begin{center}
\textbf{$\mathcal{O}_{\land}$: Intersection ($\land$) Operator}
\begin{align}
    \pi^* \in \argmin_{\pi : p_0^* = p_0^{pre}}\; \sum_{i=1}^n \alpha_i \, D_{KL}(p_1^\pi\|p_1^{pre,i}) \label{eq:and_operator_problem}
\end{align}
\end{center}
\end{tcolorbox}
\end{minipage}\hfill
\begin{minipage}{0.49\textwidth}
\begin{tcolorbox}[colframe=white, colback=gray!10, top=2pt,left=2pt,right=2pt,bottom=2pt]
\begin{center}
\textbf{$\mathcal{O}_{\lor}$: Union ($\lor$) Operator}
\begin{align}
    \pi^* \in \argmin_{\pi : p_0^* = p_0^{pre}}\; \sum_{i=1}^n \alpha_i \, D^R_{KL}(p_1^\pi \| p_1^{pre,i}) \label{eq:or_operator_problem}
\end{align}
\end{center}
\end{tcolorbox}
\end{minipage}

\looseness -1 The $D_{KL}$ divergences in Eq. \ref{eq:and_operator_problem} heavily penalize density allocation in any region with low prior density for any model $\pi^{pre,i}$, leading to an optimal flow model $\pi^*$ inducing  \smash{$p_1^*(x) \propto \prod_{i=1}^n p_1^{pre,i}(x)^{\alpha_i}$}~\citep[cf.][]{heskes1997selecting}. Similarly, the reverse KL divergence \smash{$D^R_{KL}(p \| q) \coloneqq D_{KL}(q \| p)$} in Eq. \ref{eq:or_operator_problem} induces a mode-covering behaviour implying a flow model $\pi^*$ with density \smash{$p_1^* \propto \sum_{i=1}^n \alpha_i p_1^{pre,i}(x)$}~\citep[cf.][]{banerjee2005clustering} sufficiently covering all regions with enough prior density, for any $p_1^{pre,i}, \; i \in [n]$.
\paragraph{Out-of-Distribution Flow Merging.} \looseness -1 We denote by \emph{out-of-distribution}, the case where $\pi^*$ samples from regions insufficiently covered by all priors. An example is the \emph{interpolation operator} $\mathcal{O}_{W_p}$ (see Eq. \ref{eq:interpolation_operator_problem}), inducing $p_1^*$ equal to the priors Wasserstein Barycenter~\citep{cuturi2014fast}.
\begin{tcolorbox}[colframe=white!, top=2pt,left=2pt,right=2pt,bottom=2pt]
\begin{center}
\textbf{$\mathcal{O}_{W_p}$: Interpolation (Wasserstein-$p$ Barycenter) Operator}
\begin{align}
    \argmin_{\pi}\;  \sum_{i=1}^n \alpha_i W_p( p_1^{\pi}  \, \| \,  p_1^{pre,i}) \label{eq:interpolation_operator_problem}
\end{align}
\end{center}
\end{tcolorbox}
\paragraph{Straightforward Generalizations.}
\looseness -1 While we presented a few practically relevant operators, the framework in Eqs. \ref{eq:reward_guided_flow_merging_problem} is not tied to them: it trivially admits any new operator defined via other divergences (\eg MMD, Rényi, Jensen–Shannon), and allows diverse $D_i$ for each prior flow models $\pi^{pre,i}$. Moreover, sequential composition of these operators makes it possible to implement arbitrarily complex logical operations over generative models. For instance, as later shown in Sec. \ref{sec:experiments}, one can obtain $\pi^* = (\pi^{pre,1} \lor \pi^{pre,2}) \land \pi^{pre,3}$ by first computing $\pi_{1,2} \coloneqq \Operator_{\lor}(\pi^{pre,1}, \pi^{pre,2})$ and then $\pi^* \coloneqq \Operator_{\land}(\pi_{1,2}, \pi^{pre,3})$. We denote such operators by \emph{generative circuits}, and illustrate one in Fig. \ref{fig:toy2_top_d}.

\looseness -1 While being of high practical relevance, the presented framework entails optimizing non-linear distributional utilities (see Eq. \ref{eq:reward_guided_flow_merging_problem}) beyond the reach of standard RL or control schemes, as shown by~\citet{santi2025flow}. In the next section, we show how to reduce the introduced problem to sequential fine-tuning for maximization of rewards automatically determined by the choice of operator $\mathcal{O}$.

%% file: sections/algorithm.tex
\looseness -1 In this section, we introduce \AlgNameDef, see Alg. \ref{alg:algorithm}, which provably solves Problem \ref{eq:reward_guided_flow_merging_problem}. \AlgNameShort implements general operators $\Operator$ (see Sec. \ref{sec:problem_setting}) by solving the following problem:
\begin{tcolorbox}[colframe=white!, top=2pt,left=2pt,right=2pt,bottom=2pt]
\begin{center}
\textbf{Reward-Guided Flow Merging as Probability-Space Optimization}
    \begin{align}
        p_1^{\pi^*} \in \argmax_{p_1^\pi}\;  \G(p_1^\pi) \quad \text{ with } \quad \G(p_1^\pi) \coloneqq \EV_{x\sim p_1^\pi} \left[ f(x) \right] - \sum_{i=1}^n \alpha_i \D_i( p_1^{\pi}  \, \| \,  p_1^{pre,i}) \label{eq:probability_space_viewpoint}
    \end{align}
\end{center}
\end{tcolorbox}
Given an initial flow model \smash{$\pi^{init} \in \{\pi^{pre,i}\}_{i\in[n]}$}, \AlgNameShort follows a mirror descent (MD) scheme~\citep{nemirovskij1983problem} for $K$ iterations by sequentially fine-tuning $\pi^{init}$ to maximize surrogate rewards $g_k$ determined by the chosen operator, \ie $\G$. To understand how \AlgNameShort computes the surrogate rewards $\{g_k\}_{k=1}^K$ guiding the optimization process in Eq. \ref{eq:probability_space_viewpoint}, we first recall the notion of first variation of $\G$ over a space of probability measures~\citep[cf.][]{hsieh2019finding}. A functional $\G \in \mF(\X)$ has a first variation at $\mu \in \mP(\X)$ if there exists a function $\delta \G(\mu) \in \mF(\X)$ such that:
\begin{equation*}
    \G(\mu + \epsilon \mu') = \G(\mu) + \epsilon \langle \mu', \delta \G(\mu) \rangle + o(\epsilon). 
\end{equation*}
\looseness -1 holds for all $\mu' \in \mP(\X)$, where the inner product is an expectation. At iteration $k \in [K]$, given the current generative model $\pi^{k-1}$, \AlgNameShort fine-tunes it according to the following standard entropy-regularized control or RL problem, solvable via any established method~\citep[\eg][]{domingo2024adjoint}
\begin{equation}
    \argmax_{\pi}\quad  \langle \delta \G \left( p^{\pi_{k-1}}_1 \right), p_1^{\pi} \rangle - \frac{1}{\gamma_k} D_{KL}(p_1^{\pi} \,\|\, p_1^{\pi_{k-1}})\label{eq:opt_first_variation}
\end{equation}
Thus, we introduce a surrogate reward function $g_k: \X \to \R$ defined for all $x \in \X$ such that:
\begin{equation}
    g_k(x) \coloneqq \delta \G \left(p^{\pi^{k-1}}_1 \right) (x) \quad \text{ and } \quad \EV_{x \sim p_1^\pi}[g_k(x)] = \langle \delta \G \left( p^{\pi^{k-1}}_1 \right), p_1^{\pi} \rangle  \label{eq:g_from_functional_linearized}
\end{equation}
\begin{algorithm}[t] 
    \small
    \caption{\AlgNameDef} 
    \label{alg:algorithm}
        \begin{algorithmic}[1]
        \State{\textbf{input: } \looseness -1 $\{\pi^{pre,i}\}_{i\in [n]}: $ pre-trained flows, $\{\D_i\}_{i\in [n]}: $ arbitrary divergences, $f:$ reward, $\{\alpha_i\}_{i\in [n]}:$ weighs, $K: $ iterations number, $\{\gamma_k \}_{k=1}^{K}$ stepsizes, $\pi^{init} \in \{\pi^{pre,i}\}_{i\in [n]}:$ initial flow model}
        \State{\textbf{Init:} $\pi_0 \coloneqq \pi^{init} $}
        \For{$k=1, 2, \hdots, K$}
        \State{Estimate $\nabla_x g_{k} = \nabla_x \delta \G(p_1^{\pi^{k-1}})$ with:
        \begin{equation}
        \begin{aligned}
        \G \left(p_1^{\pi^{k-1}}\right) &=
        \left\{
        \begin{aligned}
        &\EV_{x\sim p_1^{\pi^{k-1}}} \left[ f(x) \right] - \sum_{i=1}^n \alpha_i \D_i( p_1^{\pi^{k-1}}  \, \| \,  p_1^{pre,i})
        && \text{(Reward-Guided Flow Merging)} \\
        & - \sum_{i=1}^n \alpha_i \D_i( p_1^{\pi^{k-1}}  \, \| \,  p_1^{pre,i})
        && \text{(Flow Merging)}
        \end{aligned}
        \right.
        \end{aligned}
        \end{equation}
        }
        \State{Compute $\pi_k$ via standard reward-guided fine-tuning~\citep[\eg][]{domingo2024adjoint}:
            \begin{equation*}
                \pi_k \leftarrow \RewardGuidedFineTuningSolver(\nabla_x g_k, \gamma_k, \pi_{k-1}) 
            \end{equation*}
        }
        \EndFor
        \State{\textbf{output: } policy $\pi \coloneqq \pi_{K}$} 
        \end{algorithmic}
\end{algorithm}
\looseness -1 We now present \AlgNameDef, see Alg. \ref{alg:algorithm}. 
At each iteration $k \in [K]$, \AlgNameShort estimates the gradient of the first variation at the previous policy $\pi_{k-1}$, \ie \smash{$\nabla_x \delta \G ( p^{\pi^{k-1}}_1)$} (line $4$). Then, it updates the flow model $\pi_k$ by solving the reward-guided fine-tuning problem in Eq. \ref{eq:opt_first_variation} by employing \smash{$\nabla_x g_k \coloneqq \nabla_x \delta \G ( p^{\pi^{k-1}}_1 )$} as reward function gradient (line $5$). Ultimately, \AlgNameShort returns a final policy $\pi \coloneqq \pi_K$.
We report a detailed implementation of \RewardGuidedFineTuningSolver in Apx. \ref{sec:app_alg_implementation}.

\paragraph{Implementation of Intersection, Union, and Interpolation operators.}
In the following, we present the specific expressions of $\nabla_x \delta \G(p_1^{\pi})$ for pure model merging with the intersection ($\Operator_\land$), union ($\Operator_\lor$), and interpolation ($\Operator_{W_p}$) operators introduced in Sec. \ref{sec:problem_setting}. 
\begin{equation*}
    \nabla_x \delta \G(p_1^{\pi})(x) = 
    \begin{cases}
        - \sum_{i=1}^n \alpha_i s^{k-1}(x, t=1) + \sum_{i=1}^n \alpha_i s^{\pi^{pre,i}}(x,t=1) & \text{Intersection ($\Operator_\land$)}\\
        - \sum_{i=1}^n \nabla_x \exp{(\phi_i^*(x)-1)}, \phi_i^* \text{ as by Eq. \ref{eq:first_var_reverse_KL}} & \text{Union ($\Operator_\lor$)} \\
        - \sum_{i=1}^n \nabla_x \phi_i^*(x), \phi_i^* = \argmax_{\phi: \|\nabla_x \phi\|\leq 1} \langle \phi, p^\pi - p^{pre,i}\rangle & \text{Interpol. ($\Operator_{W_1}$)}
    \end{cases}
\end{equation*}
\looseness -1 Where by $s^{k-1}(x,t) \coloneqq \nabla \log p^{\pi-1}_t(x)$ we denote the score of model $\pi^{k-1}$ at point $x$ and time $t$, and \smash{$s^{pre,i} \coloneqq s^{\pi^{pre,i}}$}. For diffusion models, a learned neural score network is typically available; for flows, the score follows from a linear transformation of $\pi(X_t,t)$~\citep[\eg][Eq. 8]{domingo2024adjoint}:
\begin{equation}
    s^\pi_t(x) = \frac{1}{\kappa_t(\tfrac{\dot\omega_t}{\omega_t}\kappa_t - \dot\kappa_t)}\left(\pi(x,t)  - \frac{\dot\omega_t}{\omega_t}x\right) \label{eq:score_via_vector_field} 
\end{equation}
\looseness -1 For the union operator, gradients are defined via critics $\{\phi^*_i\}_{i=1}^n$ learned with the standard variational form of reverse KL, as in f-GAN training of neural samplers~\citep{nowozin2016f}. For $W_1$ interpolation, each $\phi^*_i$ plays the role of a Wasserstein-GAN discriminator with established learning procedures~\citep{arjovsky2017wassersteingan}. In both cases, each critic compares the fine-tuned density to a prior density \smash{$p_1^{pre,i}$}, seemingly requiring one critic per prior. We prove that, surprisingly, this is unnecessary for the union operator, and conjecture that analogous results hold for other divergences.
\begin{restatable}[Union operator via Pre-trained Mixture Density Representation]{proposition}{UnionMixture}
\label{proposition:union_operator_mixture}
Given $\overline{p}_1^{pre} = \nicefrac{\sum_{i=1}^n \alpha_i p_1^{pre,i}}{\sum_{i=1}^n \alpha_i}$, \ie the $\alpha$-weighted mixture density of pre-trained models, the following hold: 
\begin{equation}
    \pi^* \in \argmin_{\pi}\; \sum_{i=1}^n \alpha_i \, D^R_{KL}(p_1^\pi \,\|\, p_1^{pre,i})
    = \argmin_{\pi}\; \left(\sum_{i=1}^n \alpha_i \right)  D^R_{KL}  (p_1^\pi \,\|\, \overline{p}_1^{pre})\label{eq:union_operator_mixture_eq} 
\end{equation}
\end{restatable}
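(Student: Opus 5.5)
The plan is to show that the two objectives differ by an additive constant independent of $\pi$, so their argmin sets coincide. Recall that $D^R_{KL}(p\,\|\,q) = D_{KL}(q\,\|\,p)$. Write $p \coloneqq p_1^\pi$, $q_i \coloneqq p_1^{pre,i}$, and $A \coloneqq \sum_{i=1}^n \alpha_i > 0$. Expanding each term of the left-hand objective gives
\begin{equation*}
\sum_{i=1}^n \alpha_i D_{KL}(q_i\,\|\,p) = \sum_{i=1}^n \alpha_i \int q_i \log q_i \, \der x - \int \Big(\sum_{i=1}^n \alpha_i q_i\Big) \log p \, \der x .
\end{equation*}
The first sum does not depend on $\pi$. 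By the definition of $\overline{p}_1^{pre}$, the second term equals $-A\int \overline{p}_1^{pre}\log p\,\der x$.

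Next, expand the right-hand objective in the same way:
\begin{equation*}
A\, D_{KL}(\overline{p}_1^{pre}\,\|\,p) = A\int \overline{p}_1^{pre}\log \overline{p}_1^{pre}\,\der x - A\int \overline{p}_1^{pre}\log p\,\der x .
\end{equation*}
Subtracting the two expressions, the $p$-dependent cross-entropy terms cancel exactly. The difference is therefore the constant
\begin{equation*}
C \coloneqq \sum_{i=1}^n \alpha_i \int q_i\log q_i\,\der x - A\int \overline{p}_1^{pre}\log\overline{p}_1^{pre}\,\der x .
\end{equation*}
This is the weighted generalized Jensen--Shannon divergence of the priors, so $C \ge 0$ by concavity of entropy. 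It is independent of $\pi$, which gives $\argmin_\pi$ equality (under the same constraint $p_0^\pi = p_0^{pre}$ on both sides, if imposed).

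The main obstacle is well-posedness of the splitting. Separating $\int q_i\log(q_i/p)$ into entropy and cross-entropy parts requires each $\int q_i \log q_i$ to be finite. To avoid assuming this, I would work with the difference directly:
\begin{equation*}
\sum_i \alpha_i D_{KL}(q_i\|p) - A D_{KL}(\overline{p}_1^{pre}\|p) = \sum_i \alpha_i D_{KL}(q_i\|\overline{p}_1^{pre}).
\end{equation*}
This identity follows by writing $\log(q_i/p) = \log(q_i/\overline{p}_1^{pre}) + \log(\overline{p}_1^{pre}/p)$ and integrating against $\alpha_i q_i$. The right-hand side is finite, since $q_i \le (A/\alpha_i)\,\overline{p}_1^{pre}$ bounds each term by $\log(A/\alpha_i)$. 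Absolute continuity of $q_i$ with respect to $p$ is equivalent to that of $\overline{p}_1^{pre}$ with respect to $p$, so both objectives are $+\infty$ for the same $\pi$. Hence the argmin sets agree in all cases.
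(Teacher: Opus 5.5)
Your proof is correct and follows the same route as the paper's: expand each reverse-KL term, observe that the $\pi$-dependent cross-entropy terms combine into $-\bigl(\sum_i \alpha_i\bigr)\mathbb{E}_{\overline{p}_1^{pre}}\log p_1^\pi$, and conclude that the two objectives differ by a $\pi$-independent constant. Your additional step is a genuine refinement the paper's $n=2$ computation lacks, since the paper implicitly assumes the entropies $\int q_i \log q_i$ are finite. That step works directly with the identity $\sum_i \alpha_i D_{KL}(q_i\|p) = A\,D_{KL}(\overline{p}_1^{pre}\|p) + \sum_i \alpha_i D_{KL}(q_i\|\overline{p}_1^{pre})$, whose last sum is finite, and this also covers the case where both objectives are $+\infty$.
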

Prop. \ref{proposition:union_operator_mixture}, which is proved in Apx. \ref{sec:proof_proposition_union} implies that the union operator in Eq. \ref{eq:or_operator_problem} over $n$ prior models can be implemented by learning a single critic $\phi^*$, as shown in Sec. \ref{sec:experiments}.
In Apx. \ref{sec:gradients_derivations}, we report the gradient expressions above, and present a brief tutorial to derive the first variations for any new operator.

\looseness -1 Crucially, the score in Eq. \ref{eq:score_via_vector_field}  for the intersection gradient diverges at $t=1$ ($\kappa_1 =0$). While prior works attenuate the issue by evaluating the score at $1-\epsilon$~\citep{de2025provable}, this trick hardly scales well to high-dimensional settings. In the following, we propose a principled solution to this problem by leveraging weighted score estimates along the entire noised flow process, \ie $t \in [0,1]$.

%% file: sections/running_cost_KL.tex
\looseness -1 To tackle the aforementioned issue, we lift the problem in Eq. \ref{eq:and_operator_problem} from the probability space of the last time-step marginal $p_1^\pi$, where the score diverges, to the entire flow process:
\vspace{-1.3mm}
\begin{tcolorbox}[colframe=white!, top=2pt,left=2pt,right=2pt,bottom=2pt]
\begin{center}
\textbf{Intersection Operator via Flow Process Optimization}
\begin{align}
      \argmax_{\pi : p_0^\pi = p_0^{pre}}\;\!\Lfunc_{\land}\!\left(\mathbf{Q}^\pi\right)\!\coloneqq\!\int_0^1 \!\lambda_t \sum_{i=1}^n \alpha_i \, D_{KL}(p_t^\pi \,\|\, p_t^{pre,i}) \; \der t  \label{eq:joint_process_and_operator}
\end{align}
\end{center}
\end{tcolorbox}
\vspace{-1.5mm}
\looseness -1 Here, \smash{$\mathbf{Q}^\pi = \{p^\pi_t\}_{t\in [0,1]}$} denotes the entire joint flow process induced by policy $\pi$ over $\X^{[0,1]}$. Under general regularity assumptions, an optimal policy $\pi^*$ for Problem \ref{eq:joint_process_and_operator} is optimal also \wrt Eq. \ref{eq:and_operator_problem}. Interestingly, an optimal flow $\pi^*$ for Problem \ref{eq:joint_process_and_operator} can be computed via a MD scheme acting over the space of joint flow processes $\mathbf{Q}^\pi = \{p^\pi_t\}_{t\in [0,1]}$ determined by the following update rule:
\begin{tcolorbox}[colframe=white!, top=2pt,left=2pt,right=2pt,bottom=2pt]
\begin{center}
\textbf{Reward-Guided FM (Mirror Descent)  Step}
\begin{equation}
      \mathbf{Q}^{k} \in \argmax_{q : p_0 = p^{k-1}_0} \langle \delta \Lfunc_{\land}(\mathbf{Q}^{k-1}), \mathbf{Q}\rangle + \frac{1}{\step^{k}} D_{KL}\left(  \mathbf{Q}  \| \mathbf{Q}^{k-1} \right) \label{eq:MD_step}
\end{equation}
\end{center}
\end{tcolorbox}
\looseness -1 First, we state the following Lemma \ref{lemma:first_var_process}, which allows to express the first variation of $\Lfunc_{\land}$ \wrt the flow process $\mathbf{Q^\pi}$ as an integral of first variations \wrt marginal densities $p_t^\pi$.
\begin{restatable}[First Variation of Flow Process Functional]{lemma}{FirstVarProcess}
\label{lemma:first_var_process}
For objective $\Lfunc_{\land}$ in Eq. \ref{eq:joint_process_and_operator} it holds the following:
    \begin{equation}
    \langle \delta \Lfunc_{\land}(\mathbf{Q}^k), q \rangle\!=\!\int_0^1\!\lambda_t\!\EV_{\quad \mathbf{Q}}\!\left[\!\delta \sum_{i=1}^n\!\alpha_i \, D_{KL}(p_t^\pi \,\|\, p_t^{pre,i})\!\right]\!\der t \label{eq:running_cost_term}
    \end{equation}
\end{restatable}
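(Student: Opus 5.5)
The approach is to compute the first variation directly from its definition, perturbing the process $\mathbf{Q}^k$ in a direction $q$ and differentiating under the time integral. The key structural fact is that $\Lfunc_{\land}$ depends on $\mathbf{Q}$ only through its one-time marginals $p_t$, and marginalization is linear. Write $\mathbf{Q}^\epsilon = \mathbf{Q}^k + \epsilon q$, where $q$ is a (signed) perturbation over path space $\X^{[0,1]}$. Its time-$t$ marginal is then $p_t^{k} + \epsilon q_t$, with $q_t := (e_t)_\# q$ the pushforward of $q$ under the evaluation map $e_t(\omega)=\omega_t$. Hence
\begin{equation*}
\Lfunc_{\land}(\mathbf{Q}^\epsilon) = \int_0^1 \lambda_t\, F_t(p_t^k + \epsilon q_t)\,\der t, \qquad F_t(\mu) := \sum_{i=1}^n \alpha_i D_{KL}(\mu \,\|\, p_t^{pre,i}).
\end{equation*}

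The first step is to expand each $F_t$ using the marginal first variation recalled in Sec.~\ref{sec:algorithm}:
\begin{equation*}
F_t(p_t^k+\epsilon q_t) = F_t(p_t^k) + \epsilon \langle \delta F_t(p_t^k), q_t\rangle + o_t(\epsilon),
\end{equation*}
where explicitly $\delta F_t(\mu)(x) = \sum_i \alpha_i (\log \mu(x)/p_t^{pre,i}(x) + 1)$. The additive constant is irrelevant when $q$ has zero total mass, i.e. for differences of probability processes. The second step is the change of variables $\langle \delta F_t(p_t^k), q_t\rangle = \EV_{\mathbf{Q}\sim q}[\delta F_t(p_t^k)(X_t)]$, which follows from the definition of the pushforward. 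This makes the integrand an expectation over the process $\mathbf{Q}$ of the marginal first variation evaluated at $X_t$, exactly as in Eq.~\ref{eq:running_cost_term}. The third step is to integrate in $t$ against $\lambda_t$, divide by $\epsilon$, and let $\epsilon\to 0$. Identifying the linear term then gives $\langle \delta \Lfunc_{\land}(\mathbf{Q}^k), q\rangle = \int_0^1 \lambda_t \EV_{\mathbf{Q}}[\delta F_t(p_t^k)(X_t)]\,\der t$. Optionally, Fubini lets one rewrite this as $\EV_{\mathbf{Q}}[\int_0^1 \lambda_t \delta F_t(p_t^k)(X_t)\der t]$, which exhibits $\delta\Lfunc_{\land}(\mathbf{Q}^k)$ as a running-cost functional on paths.

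The main obstacle is justifying the exchange of limit and time integral, i.e. showing that $\int_0^1 \lambda_t\, o_t(\epsilon)\,\der t = o(\epsilon)$ uniformly. I would handle it with a convexity/dominated-convergence argument. Since $\epsilon\mapsto F_t(p_t^k+\epsilon q_t)$ is convex, the difference quotients are monotone in $\epsilon$ and converge to the directional derivative. The monotone convergence theorem then applies, provided the quotient at some fixed $\epsilon_0$ is $\lambda_t\der t$-integrable. That integrability follows under the "general regularity assumptions" alluded to in the text: $\lambda$ bounded, $q_t \ll p_t^k$ with bounded density ratio, and $\int_0^1 \lambda_t \EV_{p_t^k}|\log p_t^k/p_t^{pre,i}|\,\der t<\infty$. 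I would state these as standing assumptions (including the requirement that $q$ shares the initial marginal $p_0^{k-1}$, so the $t=0$ term vanishes), and at this point treat $\lambda_t$ as absorbing any blow-up near $t=1$.
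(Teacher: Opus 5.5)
The paper never proves this lemma: it is stated in Sec.~\ref{sec:running_cost_KL} but never restated or proved in the appendices. There is therefore no argument of the authors' to compare yours with, and your proposal has to stand on its own. It does, and it takes the natural route. The time-$t$ marginal map is the linear pushforward $(e_t)_\#$, so $\mathbf{Q}^k+\epsilon q$ has marginals $p_t^k+\epsilon q_t$. The marginal first variation of $F_t$ then gives the expansion for each fixed $t$. The pushforward identity rewrites $\langle \delta F_t(p_t^k), q_t\rangle$ as an expectation of $\delta F_t(p_t^k)(X_t)$ over paths. The only substantive step, exchanging $\epsilon\to 0$ with the $\der t$-integral, you handle correctly by convexity under your stated regularity. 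Your Fubini form is exactly the running cost $f_t$ the paper introduces right after the lemma, so your argument supplies the justification the text leaves implicit.

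There are a few small tightenings. First, convexity gives you a two-sided bound, not just monotonicity. Set $D_t(\epsilon):=\epsilon^{-1}\bigl(F_t(p_t^k+\epsilon q_t)-F_t(p_t^k)\bigr)$; the three-slope inequality then yields $D_t(-\epsilon_0)\le D_t(\epsilon)\le D_t(\epsilon_0)$ for $0<|\epsilon|\le\epsilon_0$. Your bounded-ratio and log-integrability assumptions, with $\epsilon_0$ small, make both endpoint quotients $\lambda_t\,\der t$-integrable. Dominated convergence therefore handles $\epsilon\to 0^{\pm}$ at once. Monotone convergence, as you state it, gives only the right derivative, and you must also check that the limit is finite, which the same assumptions ensure.

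Second, the condition on the initial marginal is unnecessary for the lemma, since $\{t=0\}$ is $\der t$-null. It matters only for the admissible set in Eq.~\ref{eq:MD_step}.

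Finally, the paper defines first variations by testing against probability measures $\mu'$ (positive, unit mass), which is why its KL first variation carries the $+1$. Since you keep that constant, your formula holds under the paper's convention as well as under your zero-mass one.
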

This factorization of $\langle \delta \Lfunc_{\land}(\mathbf{Q}^k), q \rangle$ shows that a flow $\pi_{k+1}$ inducing an optimal process $\mathbf{Q}^k$ \wrt the update step in Eq. \ref{eq:MD_step} can be computed by solving a control-affine optimal control problem via the same \RewardGuidedFineTuningSolver oracle used in Alg. \ref{alg:algorithm}, by introducing the running cost term: 
\begin{equation}
    f_t(x) \coloneqq \ \delta \left(\sum_{i=1}^n \alpha_i \, D_{KL}(p_t^\pi \,\|\, p_t^{pre,i})\right)(x,t)
\end{equation}
\looseness -1 with $\in [0,1)$. This algorithmic idea, which allows to control the score scale at $t \to 1$ via $\lambda_t$, thus enhancing \AlgNameShort,  trivially extends to reward-guided merging, and is accompanied by a detailed pseudocode in Apx. \ref{sec:app_alg_implementation}.

%% file: sections/theory.tex
\newcommand{\LinearFineTuningSolver}{\texttt{LinearFineTuningSolver}}
\newcommand{\Qmd}{\mathbf{Q}_\sharp}

\newcommand{\RefPro}{\mathbf{R}}
\newcommand{\drm}{\mathrm{d}}

\looseness -1 In this section, we aim to provide rigorous convergence guarantees for \AlgNameShort by interpreting its iterations as \emph{mirror descent} on the space of measures. To this end, at each iteration it must hold that $s^{k-1}(x,t) := \nabla_x \log p_t^{\pi^{k-1}}(x)$, \ie the \RewardGuidedFineTuningSolver subroutine, employed by \AlgNameShort at every iteration (see line $5$, Alg. \ref{alg:algorithm}), \emph{retains the score information}. Our key contribution is to prove this result, which is not only essential for our convergence analysis, but also of independent interest: it provides a rigorous justification for a structural assumption that underlies several control-based fine-tuning analyses, where it is typically implicitly assumed~\citep{santi2025flow,de2025provable}.

\noindent\textbf{Score Retention via Control-based Fine-Tuning.}  
\looseness -1 Our main observation is that, under a standard assumption, control-based fine-tuning schemes retain score information. These include Adjoint Matching \citep{domingo2024adjoint}, which we utilize in our experiments (Sec. \ref{sec:experiments}). We consider the standard stochastic optimal control (SOC) framework, exposed in Apx. \ref{sec:app_theory2}, and show that the fine-tuned model via SOC \emph{necessarily encodes} score information. 

\begin{tcolorbox}[colframe=white!, top=2pt,left=2pt,right=2pt,bottom=2pt]
\begin{restatable}[SOC Retains Score Information]{theorem}{AMretainsScores}
\label{theorem:AMretainsScores}
\looseness -1 Suppose that the prior diffusion model forward process converges to a standard Gaussian noise\footnote{ This is a standard assumption in diffusion modeling \citep[\eg][]{ho2020denoising, song2021scorebased}.}. Then, the model returned by a SOC fine-tuning solver is such that:
\begin{equation}
    u^\star(x,t) \coloneqq \sigma(t)\,\nabla \log p^k_t(x) 
\end{equation}
\looseness -1 where $p^k_t$ denotes the marginal distribution of the prior forward process, initialized at $p^{\pi_k}_1$, and $u^\star(x,t)$ the applied optimal control (see Sec. \ref{sec:app_theory2}). In other words, \RewardGuidedFineTuningSolver exactly recovers the score.
\end{restatable}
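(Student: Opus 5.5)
We work in the standard SOC setup of Apx.~\ref{sec:app_theory2}. There is a reference (prior) diffusion with drift $b$ and diffusion coefficient $\sigma(t)$, and we fine-tune it by adding a control $u$. The controlled SDE is $\der X_t = (b(X_t,t)+\sigma(t)u(X_t,t))\,\der t + \sigma(t)\,\der B_t$, and the cost is $\EV[\int_0^1 \tfrac12\|u\|^2\der t - r(X_1)]$. The plan has two parts. First, characterise the optimal control via the value function. Second, identify the value function with a log-density of the prior's time-reversed (forward-noising) process started at the fine-tuned terminal marginal $p_1^{\pi_k}$.

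\textbf{Step 1 (Verification / HJB).} Write the value function $V(x,t)$. By the Cole--Hopf transform, $V = -\log h$ with
\[
h(x,t) = \EV\bigl[\exp(r(X_1))\,\big|\,X_t=x\bigr],
\]
where the expectation is under the uncontrolled prior dynamics. The optimal control is then $u^\star(x,t) = \sigma(t)\nabla_x \log h(x,t)$. This is the classical result we invoke, and it is where the factor $\sigma(t)$ in the statement comes from.

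\textbf{Step 2 (Optimal path measure as an $h$-transform).} The optimal path law is $\mathbf{P}^\star(\der\omega) \propto \mathbf{P}^{pre}(\der\omega)\,e^{r(\omega_1)}$. Its time-$t$ marginal is therefore
\[
p_t^\star(x) = p_t^{pre}(x)\,h(x,t)/Z.
\]
Equivalently, $\nabla\log h = \nabla\log p_t^\star - \nabla\log p_t^{pre}$.

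\textbf{Step 3 (Relating to the forward process from $p_1^{\pi_k}$).} This is the crux. The prior generative SDE is the time reversal of a forward noising process $\overrightarrow{\mathbf{P}}$. We use the assumption that this forward process converges to a standard Gaussian, which coincides with the generative initial law $p_0$; this is what makes the reversal exact with no initial-value mismatch (the memoryless condition of \citet{domingo2024adjoint}). Under this assumption, $\mathbf{P}^{pre}$ equals the law of the forward process started at $p_1^{pre}$, read backwards. Tilting by $e^{r(X_1)}$ only reweights the starting point of that forward process: it starts at $p_1^{\pi_k}\propto p_1^{pre}e^{r}$ while keeping the same transition kernels. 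Hence $p_t^\star = p_t^k$, the marginal of the prior forward process initialised at $p_1^{\pi_k}$. Substituting into Step 2 with the reference taken as the current model expresses $u^\star$ through $\nabla\log p_t^k$. Here we must carefully match the paper's convention, in which the control parametrises the full score term in the reverse drift, i.e.\ $b + \sigma u$ with the $\sigma\nabla\log p^{pre}$ part absorbed. Once the conventions agree, this yields $u^\star = \sigma(t)\nabla\log p_t^k$.

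\textbf{Main obstacle.} I expect Step 3 to be the hard part. It requires an exact time-reversal identity, namely that the marginals of the tilted reversed process coincide with forward-from-$p_1^{\pi_k}$ marginals. That identity needs the Gaussian-convergence assumption to remove the dependence of the initial value function on $X_0$. Otherwise $h(\cdot,0)$ is nonconstant and $p_0^\star\ne p_0$, which breaks the identification. I would handle it by checking the Fokker--Planck equation directly: show that $p_t^k$ solves the forward equation of the controlled SDE with drift $b+\sigma^2\nabla\log p_t^k$ (Nelson's relation), then conclude by uniqueness of the Fokker--Planck solution under standard regularity.
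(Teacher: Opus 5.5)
Your proof is correct, but it takes a different route from the paper's.

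\textbf{How the two routes differ.} The paper works only with Schr\"odinger bridges. By Girsanov, the SOC optimum $\mathbf{Q}^\star$ solves the bridge problem between $p^{\mathrm{pre}}$ and its own terminal law $\mathbf{Q}^\star_1$, relative to the uncontrolled law $\mathbf{P}$. The paper then time-reverses that bridge problem. The forward noising process started at $\mathbf{Q}^\star_1$ is feasible with zero relative entropy, because it lands exactly on $\mathcal{N}(0,I)$. Anderson's formula then gives the drift $b+\sigma^2\nabla\log p^k_t$.

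You never use bridge uniqueness. Instead you write the optimum explicitly as the endpoint tilt $\mathbf{P}^{\mathrm{ref}}e^{r(X_1)}/h(X_0,0)$ of the current model. You then observe that a tilt by a function of $X_1$ (the forward process's initial state) only changes the initial law of the forward Markov process. Hence the optimum is the reversal of the forward process started at $p^{\pi_k}_1$.

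\textbf{What your route buys.}
\begin{itemize}
\item It shows exactly where the Gaussian assumption enters. ``Every initial law is mapped to $\mathcal{N}(0,I)$'' means the forward kernel ignores its starting point. So $X_0\perp X_1$ under the reference and $h(\cdot,0)\equiv\mathbb{E}[e^{r(X_1)}]$. State this directly rather than deferring it to the ``main obstacle''.
\item It avoids a step the paper leaves unargued. Invariance of relative entropy under reversal only moves the bridge problem to the reference $\overleftarrow{\mathbf{P}}$. That reference must then be identified with $\mathbf{P}_{\mathrm{forward}}$ up to endpoint factors, which holds e.g.\ for linear OU/VP drifts.
\item It shows that, relative to the reference, the applied control is the score difference $\sigma\nabla\log(p^k_t/p^{k-1}_t)$. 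The full-score form in the statement appears only after absorbing the reference score via $b^{\mathrm{ref}}=b+\sigma^2\nabla\log p^{k-1}_t$.
\end{itemize}

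\textbf{What it costs.} You need the reference to be an exact reversal of the forward process from its own terminal law. This holds for $\pi^{\mathrm{pre}}$ by assumption, and for every later $\pi_k$ by induction on your own result; say so explicitly. The paper takes the uncontrolled drift $b$ to be score-free and so does not assume this about the reference, at the price of the identification above.

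\textbf{Drop or repair the Fokker--Planck fallback.} If $p^k$ is frozen in the drift, the check presupposes the drift you want to identify. If instead you use Steps 1--2 to get the optimal drift $b+\sigma^2\nabla\log p^\star_t$, the marginal equation becomes $\partial_t p^\star=-\nabla\cdot(p^\star b)-\tfrac{\sigma^2}{2}\Delta p^\star$. This is backward-parabolic in $t$. Starting from $p^\star_0=\mathcal{N}(0,I)$, it is solved by the reversed forward process from \emph{any} terminal law, so uniqueness fails in that direction. Uniqueness holds only in the noising direction, from the terminal condition $p^\star_1=p^{\pi_k}_1$. Your path-measure reweighting argument in Step 3 already closes the proof, so the fallback is unnecessary.
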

\end{tcolorbox}

\looseness -1  \cref{theorem:AMretainsScores} enables us to reinterpret \cref{alg:algorithm} as generating \emph{approximate mirror iterates}, a framework that has proven effective for sampling and generative modeling~\citep{karimi2024sinkhorn,de2025provable,santi2025flow}.

\noindent\textbf{Robust Convergence under Inexact Updates.}
Thanks to \cref{theorem:AMretainsScores}, we can now develop a rigorous convergence theory for \cref{alg:algorithm} under the realistic condition that \RewardGuidedFineTuningSolver (see Sec. \ref{sec:algorithm}) is implemented \emph{approximately}.
\looseness -1 Let $\G$ be the objective in Eq. \ref{eq:probability_space_viewpoint}. Via $\pi^k$, the iterates generated by \cref{alg:algorithm} induce a sequence of stochastic processes, denoted by $\mathbf{Q}^k$, which satisfy $\mathbf{Q}^k = p_1^{\pi^k}$.
 Each iterate $\mathbf{Q}^k$ is understood as an approximation to the \emph{idealized} mirror descent step: 
\begin{equation}
    \mathbf{Q}^k_\sharp \in \argmax_{\mathbf{Q} : p_0 = p_0^{pre}} 
    \Bigl\{ \langle \delta \G(p^{\pi_k}_1), \mathbf{Q}\rangle 
    - \tfrac{1}{\gamma^k} D_{KL}\!\left(\mathbf{Q} \,\|\, \mathbf{Q}^{k-1}\right) \Bigr\}.
    \label{eq:MD_step_copy}
\end{equation}
which serves as the exact reference point for our analysis. To quantify the discrepancy between $\mathbf{Q}^k$ and $\Qmd^k$, let $\hist_k$ denote the history up to step $k$, and decompose the error as
\begin{align}
    \bias_k &\coloneqq \EV\!\left[\delta \G(p^{\pi_k}_1) - \delta \G((\Qmd^k)_1) \,\big|\, \hist_k\right], \\
    \noise_k &\coloneqq \delta \G(p^{\pi_k}_1) - \delta \G((\Qmd^k)_1) - \bias_k.
\end{align}
Here, $\bias_k$ captures systematic approximation error, and $\noise_k$ represents a zero-mean fluctuation conditional on $\hist_k$. Under mild assumptions over noise and bias (see \cref{sec:app-robust-proof}), the long-term behavior of the iterates can be characterized.

\begin{tcolorbox}[colframe=white!, top=2pt,left=2pt,right=2pt,bottom=2pt]
\begin{restatable}[Asymptotic convergence under inexact updates (Informal)]{theorem}{trajGeneralCase}
\label{theorem:general_case_convergence}
\looseness -1 Assume the oracle has bounded variance and diminishing bias, and the step sizes $\{\gamma^k\}$ satisfy the Robbins--Monro conditions
($\sum_k \gamma^k = \infty$, $\sum_k (\gamma^k)^2 < \infty$).
Then the sequence $\{p^{\pi_k}_1\}$ generated by \cref{alg:algorithm} converges almost surely to the optimum in the weak sense:
\begin{equation}
    p^{\pi_k}_1 \rightharpoonup \Tilde{p}_1 \quad \text{a.s.}, 
\end{equation}
where $\Tilde{p}_1$ is a stationary point of $\G$.
\end{restatable}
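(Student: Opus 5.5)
The plan is the stochastic-approximation / asymptotic-pseudotrajectory (APT) approach to mirror descent in the space of measures, as in the mirror-flow analyses of \citet{karimi2024sinkhorn,de2025provable,santi2025flow}.

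\textbf{Step 1: rewrite the iterates as dual averaging.} By \cref{theorem:AMretainsScores}, each fine-tuning step retains the score. Hence the solution of the entropy-regularized problem in Eq.~\ref{eq:opt_first_variation} has the explicit Gibbs form $p_1^{\pi_k} \propto p_1^{\pi_{k-1}} \exp(\gamma^k g_k)$, with $g_k$ the inexact first variation. Taking logarithms, the dual variable $h_k \coloneqq \log p_1^{\pi_k}$ follows the Robbins--Monro recursion
\begin{equation*}
h_k = h_{k-1} + \gamma^k\bigl(\delta\G(p_1^{\pi_{k-1}}) + \bias_k + \noise_k\bigr) + \text{const},
\end{equation*}
where $\bias_k$ and $\noise_k$ are defined with respect to the idealized step $\Qmd^k$ in Eq.~\ref{eq:MD_step_copy}. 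The primal iterate is recovered through the mirror map $p = \exp(h)/\int\exp(h)$. This recasts \cref{alg:algorithm} as a noisy Euler discretization of the continuous-time mirror flow
\begin{equation*}
\dot h_t = \delta\G(p_t), \qquad p_t \propto e^{h_t}.
\end{equation*}

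\textbf{Step 2: show the interpolated iterates form an APT of the mirror flow.} I would carry this out in three parts.
\begin{itemize}
\item \emph{Noise.} Bounded conditional variance together with $\sum_k(\gamma^k)^2<\infty$ makes the martingale $\sum_k \gamma^k \noise_k$ converge almost surely. This follows from Doob's inequality or the Burkholder--Davis--Gundy inequality applied in a suitable weak (dual-Lipschitz) norm on test functions.
\item \emph{Bias.} Diminishing bias gives $\sum_{k \in \text{window}} \gamma^k \|\bias_k\| \to 0$ over any finite time window.
\item \emph{Well-posedness.} The discretization error is controlled by Lipschitz continuity of $p\mapsto\delta\G(p)$, assumed for the divergences considered.
\end{itemize}
Combining these, the Benaïm--Hirsch theorem shows that the continuous-time interpolation is almost surely an APT, so its limit set is internally chain-transitive for the mirror flow.

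\textbf{Step 3: identify the limit set.} Along the flow, $\G$ is a strict Lyapunov function:
\begin{equation*}
\frac{\der}{\der t}\G(p_t) = \langle \delta\G(p_t), \dot p_t\rangle = \operatorname{Var}_{p_t}\bigl(\delta\G(p_t)\bigr) \ge 0,
\end{equation*}
with equality exactly when $\delta\G(p_t)$ is constant $p_t$-a.s., which is the stationarity condition. If the set of stationary values of $\G$ has empty interior (a Sard-type assumption), every internally chain-transitive set consists of stationary points. This yields $p_1^{\pi_k}\rightharpoonup\tilde p_1$ almost surely, with $\tilde p_1$ stationary.

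\textbf{Main obstacle.} The hardest part is establishing precompactness of $\{p_1^{\pi_k}\}$ in the weak topology and the continuity needed to apply APT theory in infinite dimensions, since Benaïm's framework assumes a locally compact state space. I would first try to obtain tightness from a uniform bound on $\G(p_1^{\pi_k})$ combined with the divergence penalties, which dominate entropy or moment growth relative to the priors; Prokhorov's theorem then gives weak precompactness. Where this fails, I would impose a coercivity assumption in the spirit of weak asymptotic coercivity.
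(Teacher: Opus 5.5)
Your architecture is the same as the paper's proof. You use the entropy mirror map and dual iterates $h_k=\log p_1^{\pi_k}$; the paper uses $\delta\mathcal{H}(p_1^{\pi_k})$, which differs only by a constant. You show the piecewise-linear interpolation is an asymptotic pseudotrajectory of the mirror flow $\dot h_t=\delta\G(p_t)$, $p_t\propto e^{h_t}$, apply Bena\"{\i}m's limit-set theorem, and use a Lyapunov argument to reduce internally chain-transitive sets to stationary points.

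Your Step 3 is actually sharper than the paper's. The identity $\tfrac{\mathrm{d}}{\mathrm{d}t}\G(p_t)=\mathrm{Var}_{p_t}(\delta\G(p_t))$ and the Sard-type condition on stationary values are exactly what is needed to pass from ``strict Lyapunov function'' to ``ICT sets consist of stationary points''. The paper only asserts this step. When $\G$ is concave your condition holds automatically, since all stationary points share the maximal value.

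The step that would not go through as planned is the one you flag as the main obstacle. The limit-set theorem needs precompactness of the interpolated \emph{dual} path. It must hold in the topology in which the vector field $h\mapsto\delta\G(p_h)$, $p_h\propto e^{h}$, is continuous. For the KL-type terms this topology is $L_\infty$: $\delta\KL(p\|q)=\log(p/q)+1$ is Lipschitz in $h$ for the sup-norm, but it is not continuous in $p$ for the weak topology. Tightness of $\{p_1^{\pi_k}\}$ via Prokhorov therefore says nothing about compactness of $\{\log p_1^{\pi_k}\}$. Moreover, the uniform bound on $\G(p_1^{\pi_k})$ you would start from is not available a priori along noisy, non-monotone iterates.

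The paper does not derive this compactness at all. It imposes precompactness of $\{\delta\mathcal{H}(p_1^{\pi_k})\}_k$ in $L_\infty$ as Assumption~\ref{asm:precompact_rewritten}. It also states the bias and noise conditions in $\|\cdot\|_\infty$ (Assumption~\ref{asm:approximate_rewritten}), i.e.\ in the same norm as the flow. So your fallback of adding an assumption is the right move, but the assumption has to be placed on the dual iterates. For the same reason:
\begin{itemize}
\item your Lipschitz condition should be on $h\mapsto\delta\G(p_h)$ rather than on $p\mapsto\delta\G(p)$;
\item a martingale estimate in a weak test-function norm would not match the norm in which the flow is continuous.
\end{itemize}

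Finally, your argument shows only that the limit set is a connected set of stationary points on which $\G$ is constant. The paper's own argument has the same limitation: it ends with convergence ``to the set of maximizers''. Concluding convergence to a single $\tilde p_1$ needs extra structure, e.g.\ isolated stationary points, or a unique maximizer in the strictly concave case.
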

\end{tcolorbox}
\noindent\textbf{Remark.}
\looseness -1 Several functionals $\mathcal{G}$ considered in this work (\eg forward and reverse KL, Jensen--Shannon, and their $f$-guided counterparts etc.) are convex over the space of measures. In these cases, Thm. \ref{theorem:general_case_convergence} trivially strengthens to convergence to a \emph{global} optimum: $p^{\pi_k}_1 \rightharpoonup p_1^\star=\mathbf{Q}_1^\star$ for some $\mathbf{Q}^\star \in \arg\max_{\mathbf{Q}:\,\mathbf{Q}_0=p_0^{\mathrm{pre}}}\mathcal{G}(\mathbf{Q}_1)$, as shown in Apx. \ref{sec:app-robust-proof}.

\begingroup
  \captionsetup[subfigure]{aboveskip=1.7pt, belowskip=0pt}
\newlength{\imgw}
\setlength{\imgw}{0.25\textwidth}
\begin{figure*}[t]
    \centering
    \begin{subfigure}{\imgw}
      \centering
      \includegraphics[width=\textwidth]{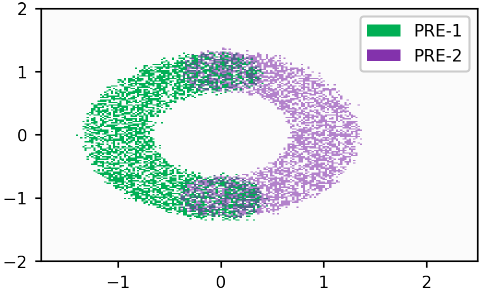}
      \caption{Pre-trained samples}
      \label{fig:toy_top_a}
    \end{subfigure}%
    \begin{subfigure}{\imgw}
      \centering
      \includegraphics[width=\textwidth]{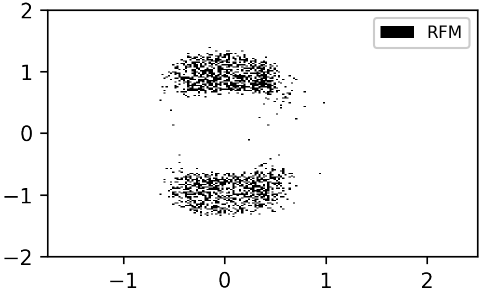}
      \caption{AND Balanced}
      \label{fig:toy_top_b}
    \end{subfigure}%
    \begin{subfigure}{\imgw}
      \centering
      \includegraphics[width=\textwidth]{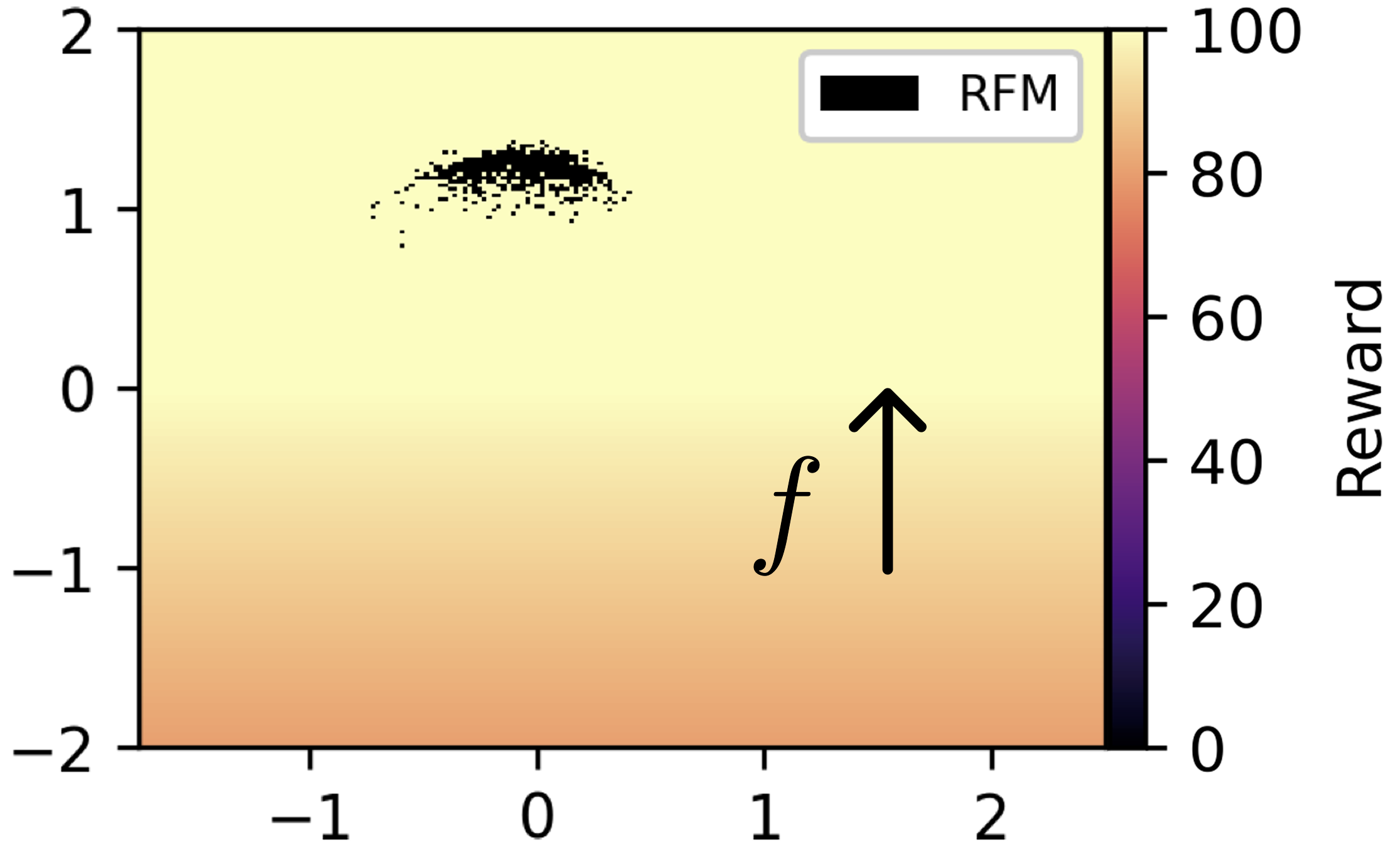}
      \caption{AND reward up}
      \label{fig:toy_top_c}
    \end{subfigure}%
    \begin{subfigure}{\imgw}
      \centering
      \includegraphics[width=\textwidth]{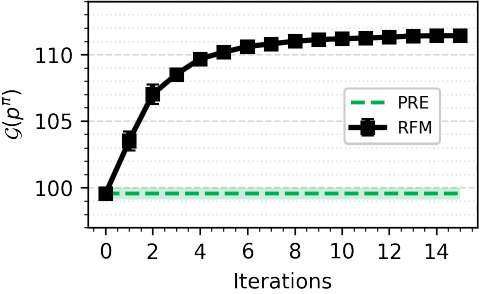}
      \caption{AND reward up}
      \label{fig:toy_top_d}
    \end{subfigure}
    \\[0.4em]
    \begin{subfigure}{\imgw}
      \centering
      \includegraphics[width=\textwidth]{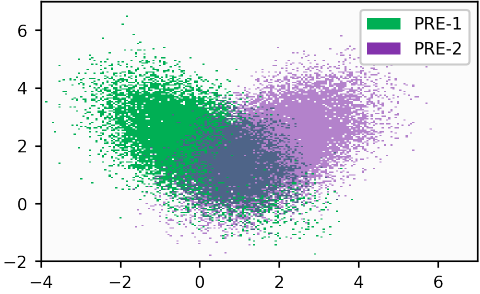}
      \caption{Pre-trained samples}
      \label{fig:toy_mid_a}
    \end{subfigure}%
    \begin{subfigure}{\imgw}
      \centering
      \includegraphics[width=\textwidth]{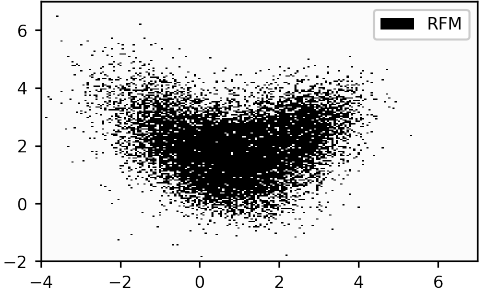}
      \caption{OR Balanced}
      \label{fig:toy_mid_b}
    \end{subfigure}%
    \begin{subfigure}{\imgw}
      \centering
      \includegraphics[width=\textwidth]{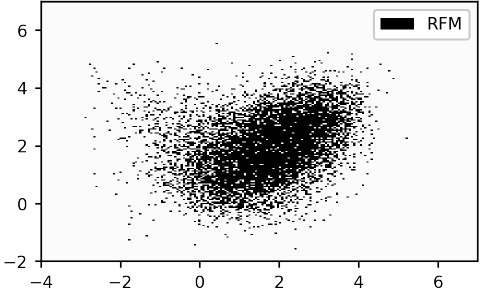}
      \caption{OR $\alpha=[0.1,0.9]$}
      \label{fig:toy_mid_c}
    \end{subfigure}%
    \begin{subfigure}{\imgw}
      \centering
      \includegraphics[width=\textwidth]{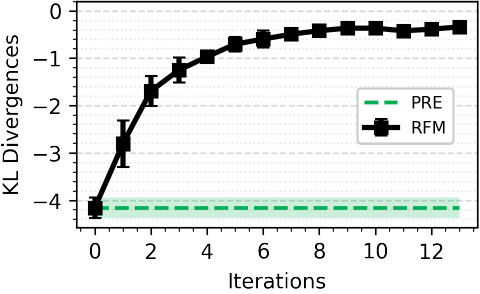}
      \caption{OR optimization}
      \label{fig:toy_mid_d}
    \end{subfigure} 
    \\[0.4em]
    \begin{subfigure}{\imgw}
      \centering
      \includegraphics[width=\textwidth]{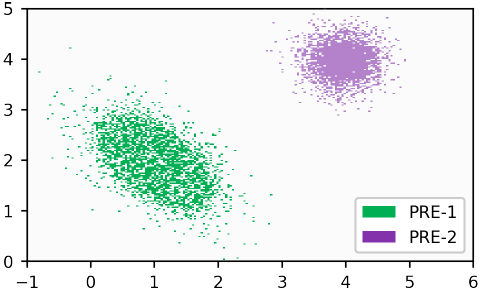}
      \caption{Pre-trained samples}
      \label{fig:toy_bottom_a}
    \end{subfigure}%
    \begin{subfigure}{\imgw}
      \centering
      \includegraphics[width=\textwidth]{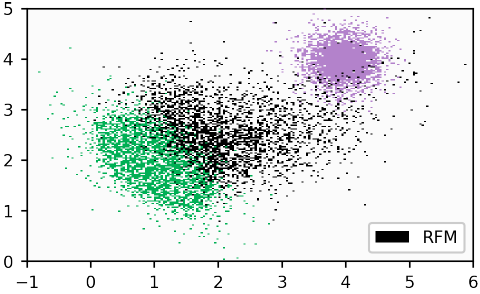}
      \caption{INTR $\pi^{init} = \pi^{pre,1}$}
      \label{fig:toy_bottom_b}
    \end{subfigure}%
    \begin{subfigure}{\imgw}
      \centering
      \includegraphics[width=\textwidth]{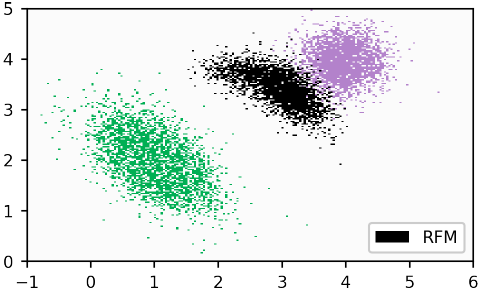}
      \caption{INTR $\pi^{init} = \pi^{pre,2}$}
      \label{fig:toy_bottom_c}
    \end{subfigure}%
    \begin{subfigure}{\imgw}
      \centering
      \includegraphics[width=\textwidth]{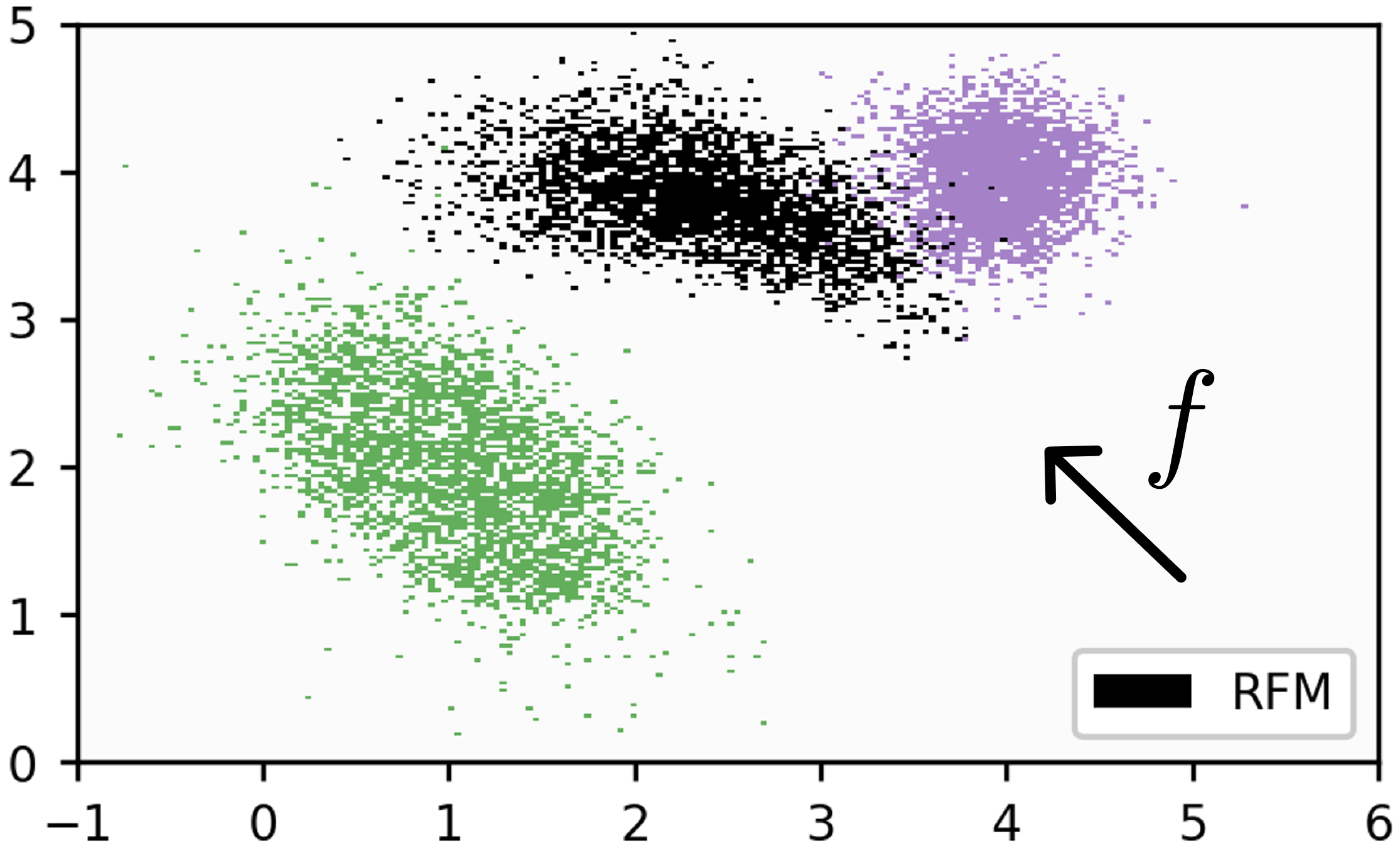}
      \caption{Reward-guided INTR}
      \label{fig:toy_bottom_d}
    \end{subfigure} 
    \caption{\looseness-1 Illustrative settings with visually interpretable results. (top) Flow model balanced pure intersection (\ref{fig:toy_top_b}), and reward-guided intersection (\ref{fig:toy_top_c}), (mid) Flow balanced and unbalanced union, (bottom) Flow model pure and reward-guided interpolation. Crucially, \AlgNameShort can correctly implement these practically relevant and diverse operators with high degree of expressivity (\eg $\alpha$, reward-guidance).} 
    \label{fig:experiments_fig_1}
\end{figure*}
\endgroup

%% file: sections/experiments.tex
\looseness -1 We evaluate \AlgNameShort for the reward-guided flow merging problem (see Eq. \ref{eq:reward_guided_flow_merging_problem}) by tackling two types of experiments: (i) visually interpretable illustrative settings, showcasing the correctness and high expressivity of \AlgNameShort, and (2) high-dimensional molecular design and conformer generation tasks. Further experimental details are reported in Apx. \ref{sec:experimental_details}

\mypar{Intersection Operator $\mathcal{O}_\land$}
\looseness -1 We consider pre-trained flow models inducing densities \smash{$p_1^{pre,1}$} (green) and \smash{$p_1^{pre,2}$} (violet), as shown in Fig. \ref{fig:toy_top_a}. We fine-tune $\pi^{init} \coloneqq \pi^{pre,1}$ via \AlgNameShort to compute the policy $\pi^*$ resulting from diverse intersection operations \smash{$\pi^* = \mathcal{O}_\land(\pi^{pre,1}, \pi^{pre,2})$}. First, in Fig. \ref{fig:toy_top_b}, we show $p^*$ (black) obtained by \AlgNameShort with $\alpha = [0.5, 0.5]$, \ie \emph{balanced} (B). One can notice that the flow model $p^*$ covers mostly the intersecting regions between \smash{$p_1^{pre,1}$} and \smash{$p_1^{pre,2}$} (see Fig. \ref{fig:toy_top_a}). In Fig. \ref{fig:toy_top_c} we report an instance of reward-guided intersection (RG) for a reward function maximized upward. As one can see, \AlgNameShort computes a policy $\pi^*$ placing density over the highest-reward region among the intersecting ones, \ie the top intersecting area. This reward-guided flow merging process is carried out via maximization over $K=15$ iterations of the objective $\G$ illustrated in Fig. \ref{fig:toy_top_d}.

\mypar{Union Operator $\mathcal{O}_\lor$}
\looseness -1 We fine-tune the pre-trained flow model $\pi^{init} = \pi^{pre,1}$ with density illustrated in Fig. \ref{fig:toy_mid_a} (green) via \AlgNameShort to implement balanced (\ie $\alpha = [0.5, 0.5]$ and unbalanced (\ie $\alpha = [0.1, 0.9]$ (UB)) versions of the union operator, namely computing \smash{$\pi^* = \mathcal{O}_\lor(\pi^{pre,1}, \pi^{pre,2})$}. As shown in Fig. \ref{fig:toy_mid_b} and \ref{fig:toy_mid_c} \AlgNameShort can successfully compute optimal policies $\pi^*$ implementing both operators via optimization of the functional $\G$, corresponding to sum of weighted KL-divergences (see Eq. \ref{eq:or_operator_problem}) evaluated for iterations $k \in [K]$ with $K=13$ in Fig. \ref{fig:toy_mid_d}.

\mypar{Interpolation Operator $\Operator_{W_1}$} 
\looseness -1 We use \AlgNameShort to compute flows $\pi^*$ inducing $p_1^*$ corresponding to diverse interpolations between the the pre-trained models' densities illustrated in Fig. \ref{fig:toy_bottom_a}. Although the optimal policy to which \AlgNameShort converges asymptotically is invariant \wrt the initial flow model $\pi^{init}$ chosen for fine-tuning, here we show that this choice can actually be used to control the algorithm execution over few iterations (\ie $K=6$). As one can expect, Fig. \ref{fig:toy_bottom_b} and \ref{fig:toy_bottom_c} show that the result density after $K=6$ iterations is closer to the flow model chosen as $\pi^{init}$, namely $\pi^{pre,1}$ (green) in Fig. \ref{fig:toy_bottom_b} and $\pi^{pre,2}$ (violet) in Fig. \ref{fig:toy_bottom_c}. We illustrate in Fig. \ref{fig:toy_bottom_d} the density (black) obtained via reward-guided interpolation, with a reward function maximized left upwards.

\mypar{Complex Logical Expressions via Generative Circuits}
\looseness -1 We consider $4$ flow models $\{\pi_{pre,i}\}_{i=1}^4$ as in Fig. \ref{fig:toy2_top_a}, which we aim to merge into one flow $\pi^*$ determined by the logical expression $\pi^* = (\pi_1 \land \pi_2) \lor (\pi_3 \land \pi_4)$. We implement the generative circuit in Fig. \ref{fig:toy2_top_d} via sequential use of \AlgNameShort. First, we compute $\pi_5 \coloneqq \mathcal{O}_\land(\pi^{pre,1}, \pi^{pre,2})$ and $\pi_6 \coloneqq \mathcal{O}_\land(\pi^{pre,3}, \pi^{pre,4})$, shown in Fig. \ref{fig:toy2_top_b}, and subsequently $\pi^* \coloneqq \mathcal{O}_\lor(\pi^{pre,3}, \pi^{pre,4})$, as illustrated in Fig. \ref{fig:toy2_top_c}. This illustrative experiments confirms that \AlgNameShort can implement complex logical expressions over generative models via generative circuits, as the simple one just presented.

\textbf{Low-Energy Molecular Design via Flow Merging} 
\looseness -1 Navigating chemical space to discover novel structures with desirable properties is a central goal of data-driven molecular design. A generative model must produce diverse, chemically valid structures that follow specified property profiles and constraints. We base our case study on two FlowMol models $\pi^{pre,1}$ and $\pi^{pre,2}$ ~\citep{dunn2024mixedcontinuouscategoricalflow} pre-trained on GEOM-Drugs ~\citep{axelrod2022geom} with different levels of single-point total energy at GFN1‐xTB level of theory~\citep{friede2024dxtb}, $-14.8$ and $-8.1$ Ha as shown in Fig. \ref{fig:drugs_a}.  We aim to compute a flow model that generates molecules whose total energy 
is likely under both generative models. To this end, we run \AlgNameShort to compute the flow $\pi^*$ returned by the intersection operator (see Eq. \ref{eq:and_operator_problem}), with parameters detailed in Apx. \ref{sec:experimental_details}. 
\begin{wrapfigure}{r}{0.26\textwidth}
  \centering 
  \includegraphics[width=0.26\textwidth]{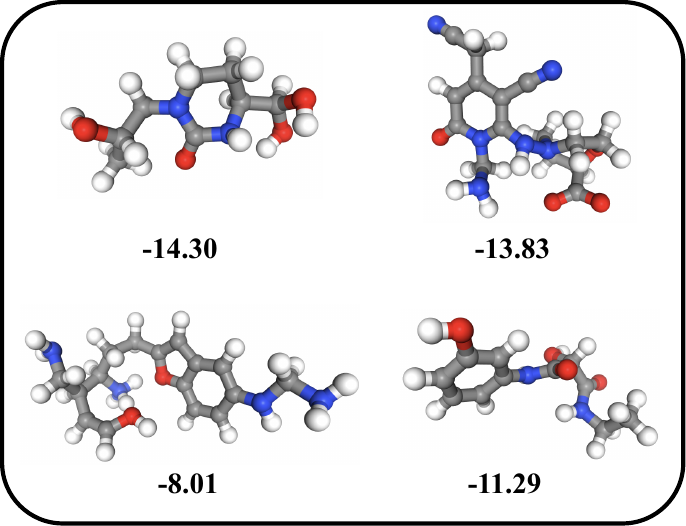}
  \caption{\looseness -1 Drug-like molecules generated by $\pi_{AND}^*$ flow via \AlgNameShort.}
  \label{fig:mols_visual} 
\end{wrapfigure}
We report the density $p_1^*$ (black) computed via balanced merging 
(\ie $\alpha_i = 1$) in Fig. \ref{fig:drugs_b} and the one obtained via unbalanced merging (\ie $\alpha_1=1,\alpha_2=2$) in Fig. \ref{fig:drugs_c}. In the former case, $p_1^*$ correctly places the majority of its density on the overlapping region between the two priors within $[-20,0]$ Ha (see Fig. \ref{fig:drugs_b}). The estimated mean energy of $\pi^*$ (black) \ie $-10.95 \pm 0.28$ Ha, reported along with validity in \ref{fig:drugs_d} matches the energy value of maximal overlap between $\pi^{pre,1}$ and $\pi^{pre,2}$, as one can see in \ref{fig:drugs_a}. Adding reward-guidance leads to lower energy values compared to the balanced merging model while keeping its high validity. 
In the unbalanced case, \AlgNameShort shifts the density slightly leftwards, effectively implementing the $\alpha$-weighted intersection. We report energy-validity metrics resulting from balanced and unbalanced intersection in Fig. \ref{fig:drugs_d}, and compare them with their reward-guided counterpart in Table \ref{tab:mol_exp_stats}. Next, we compute via \AlgNameShort the union operator over two FlowMol pre-trained on the QM9 dataset ~\citep{ramakrishnan2014quantum}. We parametrize critics $\phi_i^*$ (see Sec. \ref{alg:algorithm}) via the FlowMol latent representation with an MLP readout layer. Figure \ref{fig:union_qm9} shows that the estimated mean of the model $\pi^*$ obtained via \AlgNameShort matches the average total energy of $\pi^{pre,1}$ and $\pi^{pre,2}$ as predicted by the closed-form expression for union from Sec. \ref{sec:problem_setting}. 

\begingroup
  \captionsetup[subfigure]{aboveskip=1.7pt, belowskip=0pt}
\setlength{\imgw}{0.25\textwidth}
\begin{figure*}[t]
    \centering
    \begin{subfigure}{\imgw}
      \centering
      \includegraphics[width=\textwidth]{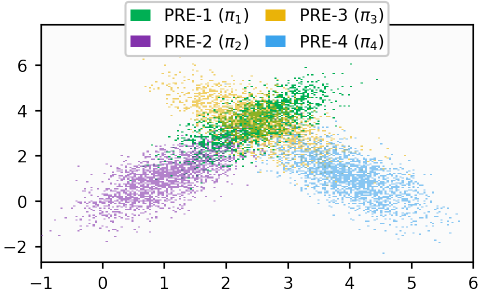}
      \caption{Pre-trained samples}
      \label{fig:toy2_top_a}
    \end{subfigure}%
    \begin{subfigure}{\imgw}
      \centering
      \includegraphics[width=\textwidth]{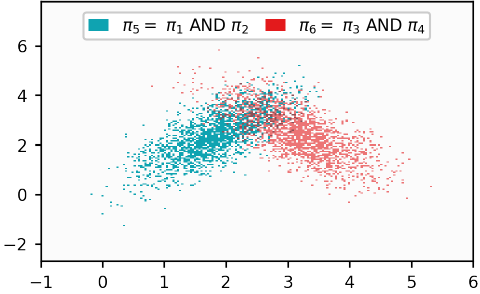}
      \caption{$\pi_5$ and $\pi_6$}
      \label{fig:toy2_top_b}
    \end{subfigure}%
    \begin{subfigure}{\imgw}
      \centering
      \includegraphics[width=\textwidth]{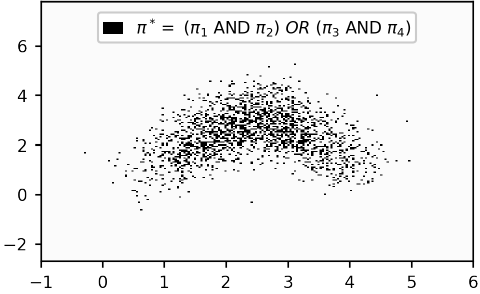}
      \caption{Circuit output $\pi^*$}
      \label{fig:toy2_top_c}
    \end{subfigure}%
    \begin{subfigure}{\imgw}
      \centering
      \raisebox{2ex}[0pt][0pt]{
      \includegraphics[width=0.9\textwidth]{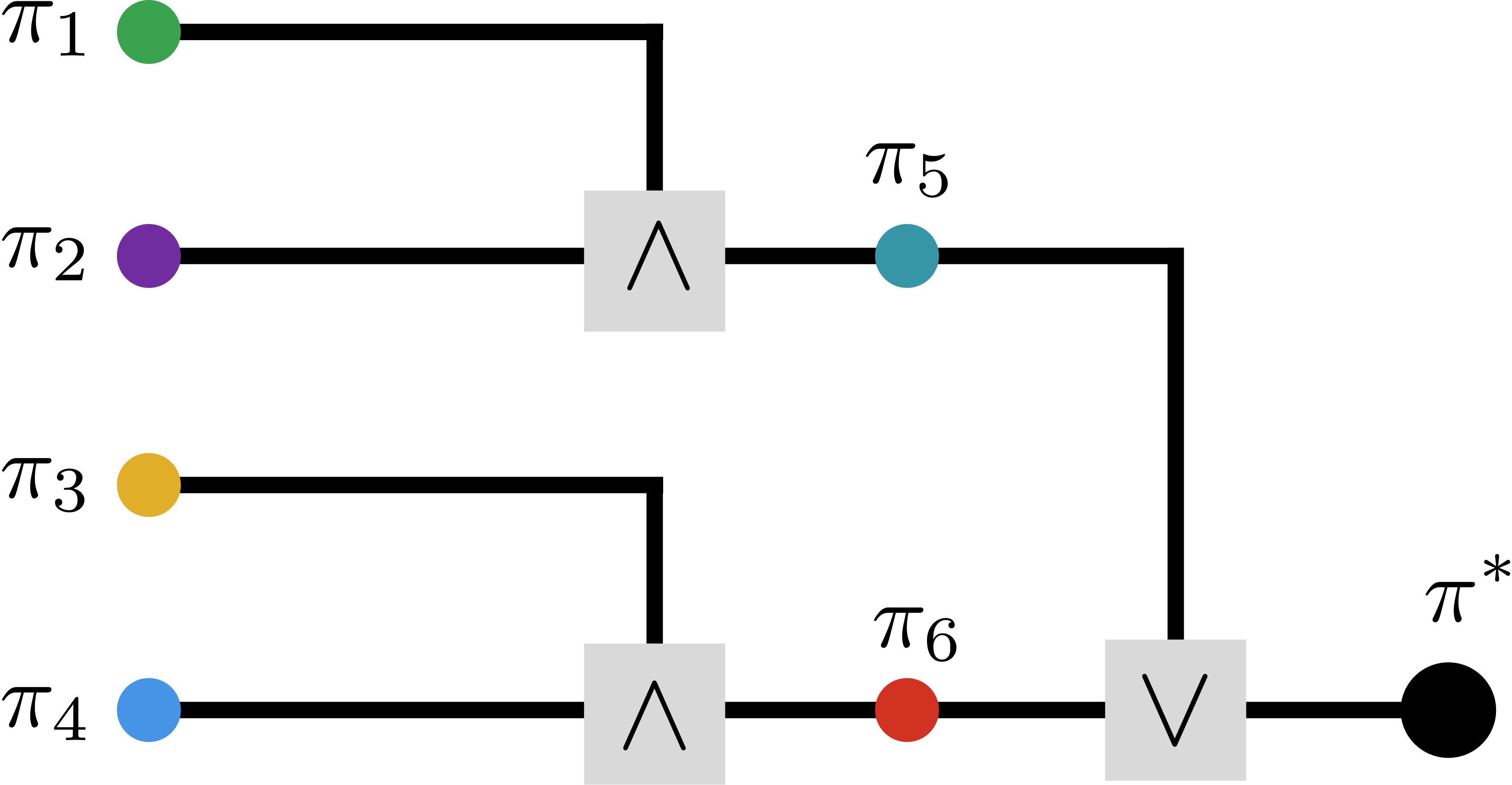}}
      \caption{Generative circuit}
      \label{fig:toy2_top_d}
    \end{subfigure}%
    \\[0.4em]
    \begin{subfigure}{\imgw}
      \centering
      \includegraphics[width=\textwidth]{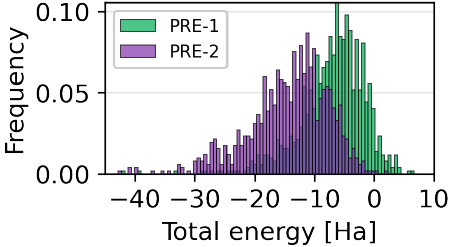}
      \caption{Pre-trained samples}
      \label{fig:drugs_a}
    \end{subfigure}%
    \begin{subfigure}{\imgw}
      \centering
      \includegraphics[width=\textwidth]{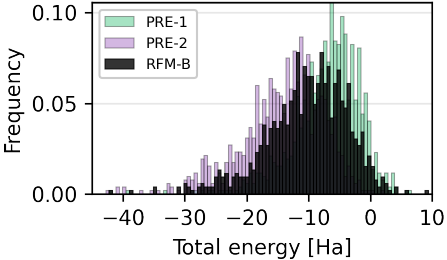}
      \caption{AND molecules}
      \label{fig:drugs_b}
    \end{subfigure}%
    \begin{subfigure}{\imgw}
      \centering
      \includegraphics[width=\textwidth]{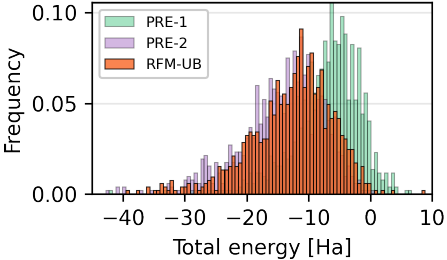}
      \caption{AND $\alpha=[0.33, 0.66]$}
      \label{fig:drugs_c}
    \end{subfigure}%
    \begin{subfigure}{\imgw}
      \centering
      \includegraphics[width=\textwidth]{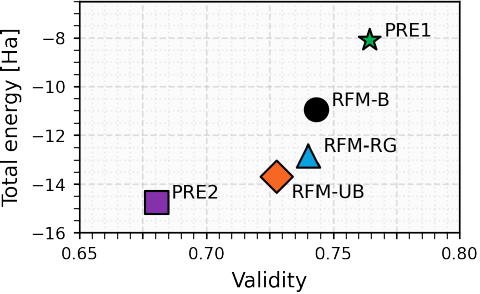}
      \caption{Validity-Energy}
      \label{fig:drugs_d}
    \end{subfigure} 
    \caption{\looseness-1 \looseness -1 (top) \AlgNameShort implements a generative circuit (\ref{fig:toy2_top_d}) describing a complex logical expressions ($\pi^* = (\pi_1 \land \pi_2) \lor (\pi_3 \land \pi_4)$) by computing sequential operators (\ref{fig:toy2_top_a}-\ref{fig:toy2_top_c}). (bottom) \AlgNameShort computes a flows intersection $\pi^*$ generating drug molecules with desired energy levels. } 
    \label{fig:experiments_fig_2} 
\end{figure*}
\endgroup

\begingroup
  \captionsetup[subfigure]{aboveskip=1.7pt, belowskip=0pt}
\setlength{\imgw}{0.25\textwidth}
\begin{figure*}[t]
    \centering
    \begin{subfigure}{\imgw}
      \centering
      \includegraphics[width=\textwidth]{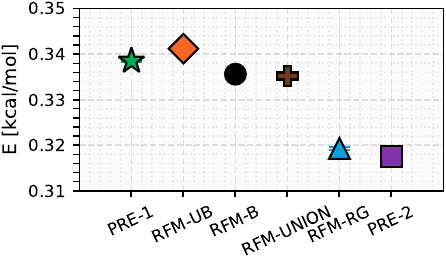}
      \caption{Energy $E$}
      \label{fig:conf_energy}
    \end{subfigure}%
    \begin{subfigure}{\imgw}
      \centering
      \includegraphics[width=\textwidth]{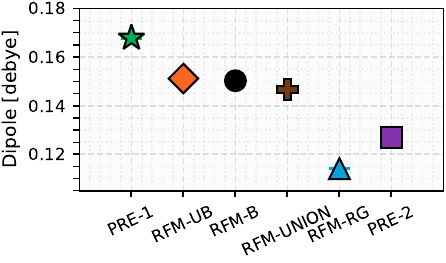}
      \caption{Dipole moment $\mu$}
      \label{fig:conf_dipole}
    \end{subfigure}%
    \begin{subfigure}{\imgw}
      \centering
      \includegraphics[width=\textwidth]{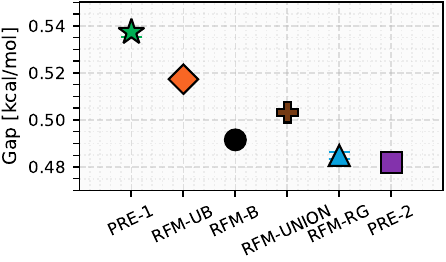}
      \caption{HOMO-LUMO gap $\Delta \epsilon$}
      \label{fig:conf_eps}
    \end{subfigure}%
    \begin{subfigure}{\imgw}
      \centering
      \includegraphics[width=\textwidth]{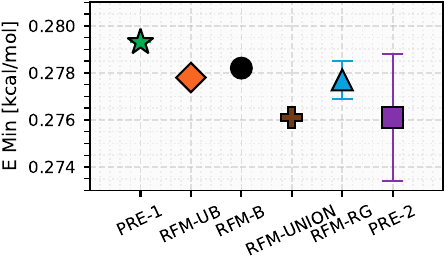}
      \caption{Min energy $E_{min}$}
      \label{fig:conf_min_energy}
    \end{subfigure}%
    \\[0.4em] 
    \caption{\looseness-1 \AlgNameShort can perform balanced (B), unbalanced (UB), reward-guided (RG) intersections, as well as unions (UNION) of ETFlow~\citep{hassan2024etflow} conformer generation models. We evaluate the resulting flows in terms of median absolute errors of energy (\ref{fig:conf_energy}), dipole moment (\ref{fig:conf_dipole}), HOMO–LUMO gap (\ref{fig:conf_eps}), and minimum energy (\ref{fig:conf_min_energy}). These results demonstrate the ability of \AlgNameShort to compute new flow models whose properties predictably interpolate those of the available pre-trained flows.\\
    } 
    \label{fig:experiments_fig_3} 
\end{figure*}
\endgroup

\textbf{Reward-Guided FM of Conformer Generation Models} 
\looseness -1 
Inferring 3D conformers from a molecule's topology is a key prerequisite for many computational chemistry applications including molecular docking~\citep{mcnutt2023conformer}, thermodynamic property prediction~\citep{pracht2021conformer}, and modeling reaction pathways for catalyst design~\citep{schmid2025rapid}. Given two prior ETFlow models (\ie PRE-1 and PRE-2) \citep{hassan2024etflow} with different property profiles, and we aim to merge them into a conformer generator whose profiles controllably interpolate between, or slightly improve upon their initial energetic ensemble property profile. In particular, we evaluate errors in energy, dipole moment, HOMO-LUMO gap and minimum energy of the generated structure ensemble compared to the equilibrium ensemble.

\looseness -1 We run \AlgNameShort initialized from PRE-2, to compute its balanced (B), unbalanced (UB), reward-guided (RG) intersection, and union variants. Figure~\ref{fig:conf_energy} shows that the median absolute error (MAE) on the total energy $E$ interpolates between PRE-1 and PRE-2, \AlgNameShort-B and \AlgNameShort-UNION achieve intermediate errors of $\approx$ 0.3356 and 0.3352 ~kcal/mol as expected. On the other hand, the reward-guided (\ie via energy minimization) counterpart reaches lower energy values, namely $\approx$ 0.3193~kcal/mol, and the unbalanced variant ($\alpha_1=0.7, \alpha_2=0.3$) remains near PRE-1 at 0.3412~kcal/mol. These results validate the ability of \AlgNameShort to perform unbalanced and reward-guided intersection. We report similar results for the dipole moment $\mu$ in Fig.~\ref{fig:conf_dipole}, for the HOMO–LUMO gap $\Delta\epsilon$~\ref{fig:conf_eps}, and minimum energy $E_{\min}$~\ref{fig:conf_min_energy}. Our evaluation indicates that \AlgNameShort can perform (reward-guided) flow merging with conformer generation models, leading to flows with controllable interpolations or improvements over property profiles of pre-trained models. 

\looseness -1 Moreover, in Apx.~\ref{sec:app_computational_cost}, we briefly investigate the computational cost of \AlgNameLong. Ultimately, although this work is primarily motivated by scientific discovery problems, we report in Apx. \ref{sec:image_experiments} an illustrative application of \AlgNameShort to image-generation diffusion models.

%% file: sections/related_works.tex
\mypar{Flow models fine-tuning via optimal control}
\looseness -1 Several works have framed fine-tuning of flow and diffusion models to maximize expected reward functions under KL regularization as an entropy-regularized optimal control problem~\citep[\eg][]{uehara2024fine, tang2024fine, uehara2024feedback,  domingo2024adjoint, gutjahr2025constrained}. More recently, \citet{santi2025flow} introduced a framework for distributional fine-tuning. The problem tackled in this work (see Eq. \ref{eq:reward_guided_flow_merging_problem}) instead extends the orthogonal setting of  expected rewards with arbitrary divergences to the case with $n>1$ pre-trained models. This generalization ($i$) unifies flow control and merging, ($ii$) renders possible to use of scalable control-based or RL schemes~\citep[\eg][]{domingo2024adjoint} for flow merging, and ($iii$) enables reward-guided flow merging.

\mypar{Flow model merging and inference-time composition}
\looseness -1 Recent works in inference-time flow and diffusion model composition introduced theory-backed schemes~\citep[\eg][]{skreta2024superposition, bradley2025mechanisms, du2023reduce}. On the other hand, our work tackles the problem of (reward-guided) flow merging~\citep[\eg][]{song2023consistency}, a significantly less explored research problem. Crucially, while inference-time flow composition aims to compose models at sampling time, flow merging aims to combine multiple flow models into one, and then disregard the prior models. This work provides a formal probability-space viewpoint on the latter problem, introduces interpretable merging operators (see Sec. \ref{sec:problem_setting}) for highly expressive compositions (\eg via generative circuits), provably implemented by \AlgNameShort, which is to our knowledge the first scheme for provable reward-guided flow merging. Moreover, to our knowledge, the theoretical guarantees in Sec. \ref{sec:theory} are first-of-their-kind for merging of flow and diffusion models. In particular, specializing them to specific operators \eg intersection, yields highly relevant results, such as generative models safety guarantees via intersection with prior safe models.

\mypar{Convex and general utilities reinforcement learning}
\looseness -1 Convex and General (Utilities) RL~\citep{hazan2019maxent, zahavy2021reward, zhang2020variational} generalizes RL to maximization of a concave~\citep{hazan2019maxent, zahavy2021reward} or general~\citep{zhang2020variational, barakat2023reinforcement} functional of the state distribution induced by a policy over a dynamical system's state space. 
Recent works tackled the finite samples budget setting~\citep[\eg][]{mutti2022importance, mutti2022challenging, mutti2023convex, de2024global, de2024geometric}. Similarly to previous optimization schemes for diffusion models~\citep{de2025provable, santi2025flow}, our framework (in Eq. \ref{eq:reward_guided_flow_merging_problem}) is related to Convex and General RL, with $p^\pi_1$ being the state distribution induced by policy $\pi$ over a subset of the flow process state space. 

\mypar{Optimization over probability measures via mirror flows}
\looseness -1 Recently, there has been a growing interest in devising theoretical guarantees for probability-space optimization problems in diverse fields of application. These include optimal transport~\citep{aubin2022mirror, leger2021gradient, karimi2024sinkhorn}, kernelized methods~\citep{dvurechensky2024analysis}, GANs~\citep{hsieh2019finding}, and manifold exploration~\citep{de2025provable} among others. To our knowledge, we present the first use of this theoretical framework to establish guarantees for flow and diffusion models merging.

%% file: sections/conclusions.tex
\looseness -1 We introduce a probability-space optimization framework for reward-guided flow merging, unifying and generalizing existing formulations. This allows to express diverse practically relevant operators over generative model densities (\eg intersection, union, interpolation, logical expressions, and their reward-guided counterparts). We propose \AlgNameLong, a mirror-descent scheme reducing complex merging tasks to sequential standard fine-tuning steps, solvable by established methods. Leveraging advances in mirror flows theory, we provide first-of-their kind guarantees for (reward-guided) flow merging. Empirical results on interpretable settings, molecular design, and conformer generation tasks demonstrate that our approach can steer pre-trained models to implement reward-guided merging tasks of high practical relevance.

%% file: appendices/app_theory2.tex
\newcommand{\Qbf}{\mathbf{Q}}

\subsection{Proof of Theorem \ref{theorem:AMretainsScores}}
\paragraph{Stochastic Optimal Control.} 
We consider stochastic optimal control (SOC), which studies the problem of steering a stochastic dynamical system to optimize a specified performance criterion. 
Formally, let $(X_t^u)_{t\in[0,1]}$ be a controlled stochastic process satisfying the stochastic differential equation (SDE)
\begin{equation}
    \drm X_t^u = b(X_t^u, t)\, \drm t + \sigma(t)\, u(X_t^u, t)\, \drm t + \sigma(t)\, \drm B_t, 
    \qquad X_0^u \sim p_0, \label{eq:SOC}
\end{equation}
where $u \in \mathcal{U}$ is an admissible control and $B_t$ is standard Brownian motion. 
The objective is to select $u$ to minimize the cost functional
\begin{equation}
\label{eq:SOC_app}
     \EV \Bigg[ \int_0^1 \frac{1}{2}\|u(X_t^u,t) \|^2 \, dt - g(X_1^u) \Bigg],
\end{equation}
where $\frac{1}{2}\|u(\cdot,t) \|^2$ represents the running cost and $g$ is a terminal reward. 
A standard application of Girsanov's theorem shows that \cref{eq:SOC_app} is equivalent to the mirror descent iterate in \cref{eq:MD_step_copy} with $\delta \G(p_1^{\pi_k}) \leftarrow g$ and $p_0 \leftarrow p^{pre}$~\citep{tang2024fine}. In addition, it is well-known that in the context of diffusion-based generative modeling, the corresponding uncontrolled dynamics
\begin{equation}
    \drm X_t = -b(X_t,t)\, \drm t + \sigma(t)\, \drm B_t \label{eq:forward}
\end{equation}
coincide with the forward noising process used in score-based models~\citep{song2021scorebased,domingo2024adjoint}.

\paragraph{Proof of Theorem \ref{theorem:AMretainsScores}.} 
\AMretainsScores*

\begin{proof}

\textbf{Step 1.}  
Let $\mathbf{Q}^\star$ denote the optimal process solving \cref{eq:SOC}.  
A standard application of Girsanov's theorem shows that $\mathbf{Q}^\star$ also solves the \emph{Schrödinger bridge problem}  
\begin{equation}
\label{eq:SBP}
\min_{\substack{\mathbf{Q}_0 = p^{\text{pre}} \\ \mathbf{Q}_1 = \mathbf{Q}^\star_1}} 
\KL\!\big(\mathbf{Q} \,\|\, \mathbf{P}\big),
\end{equation}  
where $\mathbf{P}$ is the law of the uncontrolled dynamics
\[
\drm X_t = b(X_t,t)\, \drm t + \sigma(t)\, \drm B_t.
\]  
This equivalence holds because the SOC cost in \cref{eq:SOC} penalizes control energy in the same way that Girsanov’s theorem expresses a controlled SDE as a relative entropy with respect to its uncontrolled counterpart.

\textbf{Step 2.}  
Define the \emph{forward process} $\mathbf{P}_{\text{forward}}$ by
\begin{equation}
\label{eq:forward_process}
\drm X_t = -b(X_t,t)\, \drm t + \sigma(t)\, \drm B_t.
\end{equation}  
By assumption, this process maps any initial distribution to the standard Gaussian at $t=1$. In particular, starting from $X_0 \sim \Qbf^\star_1$, we obtain $X_1 \sim p^{\text{pre}} = \mathcal{N}(0,I)$.

\textbf{Step 3.}  
Consider the \emph{time-reversed Schrödinger bridge problem}
\begin{equation}
\label{eq:SBP-reverse}
\min_{\substack{\overleftarrow{\mathbf{Q}}_0 = \mathbf{Q}^\star_1\\ \overleftarrow{\mathbf{Q}}_1 = p^{\text{pre}}}} 
\KL\!\big(\overleftarrow{\mathbf{Q}} \,\|\, \mathbf{P}_{\text{forward}}\big),
\end{equation}  
and denote its solution by $\overleftarrow{\mathbf{Q}}^\star$.  
Since relative entropy is invariant under bijective mappings and time-reversal is bijective, the optimizers of \cref{eq:SBP} and \cref{eq:SBP-reverse} satisfy
\[
\overleftarrow{\mathbf{Q}}^\star \;=\;\overleftarrow{\hspace{0.12cm}\mathbf{Q}^\star\hspace{0.1cm}}
\]
\ie the optimal reversed bridge is simply the time-reversal of the forward bridge.

By \textbf{Step 2}, the process
\begin{equation}
\label{eq:SB-sol}
\drm X_t = -b(X_t,t)\, \drm t + \sigma(t)\, \drm B_t, 
\qquad X_0 \sim \Qbf^\star_1
\end{equation}
solves \cref{eq:SBP-reverse}, achieving the minimum relative entropy (zero) while satisfying the prescribed marginals.  
Thus, invoking the relation $\overleftarrow{\mathbf{Q}}^\star = \overleftarrow{\mathbf{Q}^\star}$, the solution to \cref{eq:SBP}—and hence to the SOC problem \cref{eq:SOC}—is given by the time-reversal of \cref{eq:SB-sol}.

Finally, applying the classical time-reversal formula \citep{anderson1982reverse} yields that $\Qbf^\star$ is given by
\[
\drm {X}_t 
= \Big( b(\overleftarrow{X}_t,t) + \sigma^2(t)\, \nabla \log p_t({X}_t) \Big)\, \drm t 
+ \sigma(t)\, \drm {B}_t,
\]
where $p_t$ is the marginal density of \cref{eq:SB-sol}. Hence, \RewardGuidedFineTuningSolver exactly recovers the score function.
\end{proof}

\subsection{Rigorous Statement and Proof of Theorem \ref{theorem:general_case_convergence}}
\label{sec:app-robust-proof}

\renewcommand{\Q}{\entropy}

To prepare for the convergence analysis, we impose a few auxiliary assumptions. These assumptions are standard in the study of stochastic approximation and gradient flows, and typically hold in practical situations. Our proof strategy follows ideas that have also been employed in related works \citep{de2025provable,santi2025flow}.

We begin with the entropy functional defined on probability measures:
\begin{equation}
\label{eq:relative_properties_rewritten}
\Q(p) \coloneqq \int p \log p .
\end{equation}
In our analysis, $\Q$ serves as the \emph{mirror map} or \emph{distance-generating function} \citep{mertikopoulos2024unified,hsieh2019finding}. The first condition addresses the behavior of the corresponding dual variables.  

\begin{assumption}[Precompactness of Dual Iterates]
\label{asm:precompact_rewritten}
The sequence of dual elements $\{\delta \Q(p_1^{\pi_k})\}_k$ is precompact in the $L_\infty$ topology.
\end{assumption}

\noindent
This compactness property ensures that the interpolated dual trajectories remain confined to a bounded region of function space. Such a condition is crucial for invoking convergence results based on asymptotic pseudotrajectories. Variants of this assumption have appeared in the literature on stochastic approximation and continuous-time embeddings of discrete algorithms \citep{benaim2006dynamics,hsieh2019finding,mertikopoulos2024unified}.  

\begin{assumption}[Noise and Bias Conditions]
\label{asm:approximate_rewritten}
For the stochastic approximations used in the updates, we assume that almost surely:
\begin{align}
   &\|\bias_k\|_\infty \to 0, \\
   &\sum_{k} \EV\!\left[\step_k^2 \big(\|\bias_k\|_\infty^2 + \|\noise_k\|_\infty^2\big)\right] < \infty, \\
   &\sum_{k} \step_k \|\bias_k\|_\infty < \infty.
   \label{eq:bias-step-rewritten}
\end{align}
\end{assumption}

\noindent
These conditions, standard in the Robbins--Monro setting \citep{robbins1951stochastic,benaim2006dynamics,hsieh2019finding}, guarantee that the stochastic bias vanishes asymptotically while the cumulative noise remains under control. Together, they ensure that random perturbations do not obstruct convergence to the optimizer of the limiting objective.  

With these assumptions in place, we can now state and prove the convergence guarantee.  

\begin{tcolorbox}[colframe=white!, top=2pt,left=2pt,right=2pt,bottom=2pt]
\begin{restatable}[Convergence guarantee in the trajectory setting]{theorem}{trajGeneralCase_rigorous_rewritten}
\label{theorem:general_case_convergence_rewritten}
Suppose Assumptions \ref{asm:precompact_rewritten}--\ref{asm:approximate_rewritten} hold, and the step sizes $\{\gamma_k\}$ follow the Robbins--Monro conditions 
($\sum_k \gamma_k = \infty$, $\sum_k \gamma_k^2 < \infty$).  
Then the sequence $\{p^{\pi_k}_1\}$ generated by \cref{alg:algorithm} converges almost surely, in the weak topology, to the optimum:
\begin{equation}
    p^{\pi_k}_1 \rightharpoonup p_1^* \quad \text{a.s.},
\end{equation}
where $p_1^* = \mathbf{Q}^*_1$ for some $\mathbf{Q}^* \in \argmax_{\mathbf{Q}:\mathbf{Q}_0 = p_0^{pre}} \mathcal{G}(\mathbf{Q}_1)$.
\end{restatable}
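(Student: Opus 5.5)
We follow the asymptotic-pseudotrajectory (APT) approach of \citet{benaim2006dynamics,hsieh2019finding}, as also used in \citet{de2025provable,santi2025flow}. The mirror map is the entropy $\Q$. Write $h_k \coloneqq \delta \Q(p_1^{\pi_k}) = \log p_1^{\pi_k} + 1$ for the dual iterate.

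\textbf{Step 1 (dual-space form of the update).} By \cref{theorem:AMretainsScores}, each call to \RewardGuidedFineTuningSolver solves the KL-regularized SOC problem exactly up to oracle error. Through Girsanov, its terminal marginal is the entropic mirror step
\begin{equation*}
p_1^{\pi_k} \propto p_1^{\pi_{k-1}} \exp\bigl(\gamma_k\, \widehat{g}_k\bigr), \qquad \widehat{g}_k = \delta\G\bigl((\Qmd^k)_1\bigr) + \bias_k + \noise_k .
\end{equation*}
The constraint $\mathbf{Q}_0=p_0^{pre}$ is absorbed because the optimal bridge relative to $\mathbf{Q}^{k-1}$ keeps the initial marginal fixed. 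In dual coordinates this is
\begin{equation*}
h_k = h_{k-1} + \gamma_k\bigl(\delta\G(p_1^{\pi_{k-1}}) + \bias_k + \noise_k + e_k\bigr) + c_k,
\end{equation*}
where $c_k$ is a normalizing constant and $e_k \coloneqq \delta\G((\Qmd^k)_1)-\delta\G(p_1^{\pi_{k-1}})$. We will use continuity of $\delta\G$ in $L_\infty$ on the precompact set from \cref{asm:precompact_rewritten} to show that $\|e_k\|_\infty = O(\gamma_k)$, so $e_k$ can be absorbed into the bias terms.

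\textbf{Step 2 (continuous-time embedding and APT).} Define the mirror flow $\dot h_t = \delta\G(\nabla \Q^*(h_t))$ modulo constants, where $\nabla\Q^*(h) = e^{h}/\!\int e^{h}$. Interpolate the iterates $h_k$ piecewise linearly on the time grid $\tau_k = \sum_{j\le k}\gamma_j$. The martingale $\sum_k \gamma_k \noise_k$ converges a.s. in $L_\infty$ by the square-summability in \cref{asm:approximate_rewritten} together with a Banach-space martingale convergence argument. The condition $\sum_k\gamma_k\|\bias_k\|_\infty<\infty$ controls the bias. Combining these with precompactness, Proposition 4.1 of \citet{benaim2006dynamics} yields that the interpolation is an APT of the mirror flow.

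\textbf{Step 3 (Lyapunov function and limit set).} Along the flow, $\frac{d}{dt}\G(p_t) = \langle \delta\G(p_t), \partial_t p_t\rangle = \mathrm{Var}_{p_t}\bigl(\delta\G(p_t)\bigr) \ge 0$. Hence $\G$ is a strict Lyapunov function whose zero set is the set of stationary points, i.e. points where $\delta\G$ is constant on the support. By Benaïm's theorem on internally chain-transitive sets, the limit set of an APT is contained in this stationary set, provided $\G$ of the stationary set has empty interior. In the convex case (the Remark), we instead use $D_{KL}(p^\star\|p_t)$ as the Lyapunov function: its derivative is $-\langle \delta\G(p_t), p^\star-p_t\rangle \le -(\G(p^\star)-\G(p_t))$. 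This gives convergence to the maximizer, and the $L_\infty$ convergence of $h_k$ along subsequences then implies weak convergence of $p_1^{\pi_k}$.

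\textbf{Main obstacle.} The delicate part is Step 3 in infinite dimensions. Stationary points must be handled in the relative interior, since entropic mirror flow never reaches boundary measures, and precompactness must be converted into weak convergence of the whole sequence rather than along subsequences. We would first try the KL-Lyapunov argument combined with a ``$\liminf$ of the gap is zero'' argument, borrowed from \citet{hsieh2019finding}.
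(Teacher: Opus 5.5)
Your proposal follows the same route as the paper's proof. It uses dual iterates $h_k=\log p_1^{\pi_k}$ (up to constants), interpolated into a precompact asymptotic pseudotrajectory of the entropic mirror flow, then Bena\"{\i}m's limit-set theorem, $\mathcal{G}$ as a strict Lyapunov function to reduce internally chain-transitive sets to stationary points, and concavity to identify these with maximizers. Along the way, your identity $\tfrac{\mathrm{d}}{\mathrm{d}t}\mathcal{G}(p_t)=\mathrm{Var}_{p_t}(\delta\mathcal{G}(p_t))$, the empty-interior condition on $\mathcal{G}$ of the stationary set, and the $D_{KL}(p^\star\|p_t)$ argument make explicit what the paper only asserts.

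Two justifications, which the paper also leaves implicit, need tightening. First, the process-level KL step reduces to the terminal tilt $p_1^{\pi_k}\propto p_1^{\pi_{k-1}}e^{\gamma_k\hat g_k}$ because $X_0$ and $X_1$ are independent under $\mathbf{Q}^{k-1}$ (the memoryless/Gaussian-noise setting behind \cref{theorem:AMretainsScores}); it is not because the bridge fixes the initial marginal. Second, $L_\infty$ has no martingale type $>1$, so square-summable increments alone do not make $\sum_k\gamma_k U_k$ (or its windowed sums) vanish; the noise control has to use the precompactness of \cref{asm:precompact_rewritten}, e.g.\ through finite-rank approximation on the compact set containing those windowed sums.
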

\end{tcolorbox}

\begin{proof}
We analyze the continuous-time mirror flow defined by
\begin{equation}\label{eq:MF_rewritten}
\dot{h}_t = \delta \mathcal{G}(p^t_1), 
\qquad 
p^t_1 = \delta \Q^\star(h_t),
\end{equation}
where the Fenchel conjugate of $\Q$ is given by 
$\Q^\star(h) = \log \int e^{h}$ \citep{hsieh2019finding,hiriart2004fundamentals}.  

To link the discrete dynamics to this continuous flow, we construct a piecewise linear interpolation of the iterates:
\[
\hat{h}_t = h^{(k)} + \frac{t - \tau_k}{\tau_{k+1} - \tau_k}\big(h^{(k+1)} - h^{(k)}\big), 
\quad 
h^{(k)} = \delta \Q(p^{\pi_k}_1), 
\quad 
\tau_k = \sum_{r=0}^k \alpha_r,
\]
where $\{\alpha_r\}$ denotes the step-size sequence. This interpolation produces a continuous path $\hat{h}_t$ that tracks the discrete updates as the steps shrink.  

Let $\Phi_u$ denote the flow map of \eqref{eq:MF_rewritten} at time $u$. Standard results in stochastic approximation \citep{benaim2006dynamics,hsieh2019finding,mertikopoulos2024unified} imply that for any fixed horizon $T>0$, there exists a constant $C(T)$ such that
\[
\sup_{0 \leq u \leq T} 
\|\hat{h}_{t+u} - \Phi_u(\hat{h}_t)\|
\leq 
C(T)\Big[\Delta(t-1,T+1) + b(T) + \gamma(T)\Big],
\]
where $\Delta$ accounts for cumulative noise, $b$ for bias, and $\gamma$ for step-size effects. Under Assumptions \ref{asm:precompact_rewritten}--\ref{asm:approximate_rewritten}, these quantities vanish asymptotically, ensuring that $\hat{h}_t$ forms a precompact asymptotic pseudotrajectory (APT) of the mirror flow.  

By the APT limit set theorem \citep[Thm.~4.2]{benaim2006dynamics}, the limit set of a precompact APT is contained in the internally chain transitive (ICT) set of the underlying flow. In our case, \cref{eq:MF_rewritten} corresponds to a gradient-like flow in the Hellinger--Kantorovich geometry \citep{mielke2025hellinger}, with $\mathcal{G}$ serving as a strict Lyapunov function. As $\mathcal{G}$ decreases strictly along non-stationary trajectories, the ICT set reduces to the collection of stationary points of $\mathcal{G}$.  

Ultimately, notice that if $\mathcal{G}$ is composed of convex divergences (e.g., forward or reverse KL terms) possibly together with a linear component (\ie any reward function $f$), its stationary points coincide with its global maximizers. Consequently, $\hat{h}_t$ converges almost surely to the set of maximizers of $\mathcal{G}$, which establishes the claim.  Moreover, notice that within Sec. \ref{sec:theory}, we report the previously proved statement for stationary-points and then specialize it for global maximizers in the case of convex functionals.
\end{proof}

%% file: appendices/gradients_derivations.tex
\subsection{A brief tutorial on first variation derivation}
In this work, we focus on the functionals that are Fréchet differentiable: Let $V$ be a normed spaces. Consider a functional $F:V \rightarrow \R$.
    There exists a linear operator $A : V \rightarrow \R$ such that the following limit holds
    \begin{equation} \label{eqn_frechet_derivative_def}
        \lim_{\|h\|_V\rightarrow 0} \frac{|F(f + h) - F(f) - A[h]|}{\|h\|_V} = 0.
    \end{equation}
We further assume that $V$ has enough structure such that every element of its dual (the space of bounded linear operator on $V$) admits a compact representation. For example, if $V$ is the space of bounded continuous functions with compact support, there exists a unique positive Borel measure $\mu$ with the same support, which can be identified as the linear functional.
We denote this element as $\delta F[f]$ such that $\langle \delta F[f], h\rangle = A[h]$. Sometimes we also denote it as $\frac{\delta F}{\delta f}$. We will refer to $\delta F[f]$ as the first-order variation of $F$ at $f$.

In the following, we briefly present standard strategies to derive the first-order variation of two broad classes of functionals, including a wide variety of divergence measures, which can be employ to implement novel operators by Eq. \ref{eq:reward_guided_flow_merging_problem}. We consider: $(i)$ those defined in closed form with respect to the density (e.g., forward KL) and, $(ii)$ those defined via variational formulations (e.g.,  Wasserstein distance, reverse KL, and MMD).
\begin{itemize}[leftmargin=*]
    \item \textbf{Category 1: Functional defined in a closed form with respect to the density.} For this class of functionals, the first-order variations can typically be computed using its definition and chain rule.

    Recalling the definition of first variation (\ref{eqn_frechet_derivative_def}), we can calculate the first-order variation of the mean functional, as a trivial example. Given a continuous and bounded function $r: \R^d \rightarrow \R$ and a probability measure $\mu$ on $\R^d$,  define the functional $F(\mu) = \int r(x) \mu(x) dx$. Then we have:
    \begin{equation}
        |F(\mu + \delta \mu) - F(\mu) - \langle r,  \delta\mu \rangle| = 0.
    \end{equation}
    Therefore we obtain that: $\delta F[\mu] = r$ for all $\mu$.
    In the following section, we compute similarly the first variation of the KL divergence.
    \item \textbf{Category 2: Functionals defined through a variational formulation.} Another fundamental subclass of functionals that plays a central role in this work is the one of functionals defined via a variational problem
    \begin{equation}
        F[f] = \sup_{g \in \Omega} G[f, g],
    \end{equation}
    where $\Omega$ is a set of functions or vectors independent of the choice of $f$, and $g$ is optimized over the set $\Omega$. We will assume that the maximizer $g^*(f)$ that reaches the optimal value for $G[f, \cdot]$ is unique (which is the case for the functionals considered in this project).    
    It is known that one can use the Danskin's theorem (also known as the envelope theorem) to compute
    \begin{equation}
        \frac{\delta F[f]}{\delta f} = \partial_f G[f, g^*(f)],
    \end{equation}
    under the assumption that $F$ is differentiable \citep{milgrom2002envelope}.
\end{itemize}

\subsection{Derivation of First Variations used in Sec. \ref{sec:algorithm}}
In the following, we derive explicitly the first variations employed in Sec. \ref{alg:algorithm} 

\begin{itemize}[leftmargin=*]
      \item \textbf{Optimal transport and Wasserstein-p distance (Category 2)}
    Consider the optimal transport problem
    \begin{equation}
        \mathrm{OT}_c(u, v) = \inf_{\gamma}\left\{\int\int c(x, y) d\gamma(x, y): \int \gamma(x, y) dx = u(y), \int \gamma(x, y) d y = v(x)\right\}
    \end{equation}
    where 
    \begin{equation*}
        \Gamma = \left\{ \gamma : \int \gamma(x, y) dx = u(y), \int \gamma(x, y) d y = v(x) \right\}
    \end{equation*}
    It admits the following equivalent dual formulation
    \begin{align}
        \mathrm{OT}_c(u, v) = \sup_{f, g} \left\{\int f du + \int g dv: f(x) + g(y) \leq c(x, y) \right\} 
    \end{align}
    By taking $c(x, y) = \|x-y\|^p$, we recover $\mathrm{OT}_c(u, v) = W_p(u, v)^p$.
    Let $\phi^*$ and $g^*$ be the solution to the above dual optimization problem.
    From the Danskin's theorem, we have
    \begin{equation} \label{eqn_first_order_variation_Wp}
        \frac{\delta}{\delta u} W_p(u, v)^p = \phi^*.
    \end{equation}
    In the special case of $p=1$, we know that $g^* = - \phi^*$ (note that the constraint can be equivalently written as $\|\nabla \phi\| \leq 1$), in which case $\phi^*$ is typically known as the critic in the Wasserstein-GAN framework~\citep[cf. ][]{arjovsky2017wassersteingan}.
    \item \textbf{Reverse KL divergence (Category 2)} We use the variational (Fenchel–Legendre) representation of the forward KL, $D_{KL}(p \| q)$, as in f-GAN~\citep{nowozin2016f}:
    \begin{equation}
        D_{KL}(p \| q) = \sup_{\phi:\X \to \R} \left\{ \EV_p \phi(x) - \EV_q e^{\phi(x)-1} \right\}
    \end{equation}
    which follows from the general f-divergence dual generator $f(u) = u \log u - u + 1$ whose conjugate is $f^*(t) = e^{t-1}$. For fixed $p$ and variable $q$, we define:
    \begin{equation}
        G(q, \phi) \coloneqq \EV_p \phi(x) - \EV_q e^{\phi(x) -1}
    \end{equation}
    Assuming uniqueness of a maximizer $\phi^*(p,q)$, Danskin's (or envelope) theorem yields the first variation  by differentiating $G$ at $\phi^*$:
    \begin{equation}
        \frac{\delta}{\delta q(x)} D_{KL}(p \| q) =  \frac{\delta}{\delta q(x)} \left( - \int q(x) e^{\phi^*(x)-1} \der u \right) = - e^{\phi^*(x)-1} \label{eq:first_var_reverse_KL}
    \end{equation}
    \item \textbf{KL divergence (Category 1)} Consider the KL functional:
    \begin{equation}
        D_{KL}(p \| q) = -\int p \log \frac{p}{q} , \der x
    \end{equation}
    By the definition of the first-order variation (see Eq. \ref{eqn_frechet_derivative_def}), we have:
    \begin{equation}
        \delta D_{KL}(p \| q) = \log \frac{p}{q} + 1
    \end{equation}
\end{itemize}

%% file: appendices/proof_proposition_union.tex
\looseness -1 For the union operator, gradients are defined via critics $\{\phi^*_i\}_{i=1}^n$ learned with the standard variational form of reverse KL, as in f-GAN training of neural samplers~\citep{nowozin2016f}. For $W_1$ interpolation, each $\phi^*_i$ plays the role of a Wasserstein-GAN discriminator with established learning procedures~\citep{arjovsky2017wassersteingan}. In both cases, each critic compares the fine-tuned density to a prior density \smash{$p_1^{pre,i}$}, seemingly requiring one critic per prior. We prove that, surprisingly, this is unnecessary for the union operator, and conjecture that analogous results hold for other divergences.

\UnionMixture*
Prop. \ref{proposition:union_operator_mixture}, which is proved in the following, implies that the union operator in Eq. \ref{eq:or_operator_problem} over $n$ prior models can be implemented by learning a single critic $\phi^*$, as shown in Sec. \ref{sec:experiments}.

\begin{proof}
    We prove the statement for $n=2$, which trivially generalizes to any $n$.
    We first rewrite the LHS optimization problem as:
    \begin{equation}
        \argmin_\pi \F(p^\pi)
    \end{equation}
    where we denote $p_1^\pi$ by $p^\pi$ for notational concision and define $p_1 = p^{pre,i}$ and $p_2 = p^{pre,2}$. Then we have:
    \begin{align}
        \F(p^\pi) &= \alpha_1 \EV_{p_1} [\log p_1 - \log p ^\pi] + \alpha_2 \EV_{p_2} [\log p_2 - \log p^\pi]\\
        &= \alpha_1 \EV_{p_1} \log p_1 + \alpha_2 \EV_{p_2} \log p_2 - \left(\alpha_1\EV_{p_1} \log p^\pi + \alpha_2 \EV_{p_2} \log^{\pi} \right) \label{eq:first_part_step}
    \end{align}
    We now write the following, where $\Bar{p}$ denotes $\Bar{p}^{pre}_1$:
    \begin{align}
        \EV_{\Bar{p}} \log p^\pi &= \int \log p^\pi(x) \Bar{p}(x) \; \der x\\
        &= \int \log p^\pi(x) \left[  \frac{\alpha_1 p_1}{\alpha_1 + \alpha_2} + \frac{\alpha_2 p_2}{\alpha_1 + \alpha_2} \right](x) \; \der x\\
        &= \frac{1}{\alpha_1 + \alpha_2} \left( \log p^\pi (x) \alpha_1 p_1(x) + \log p^\pi(x) \alpha_2 p_2(x)\right)\\
        &= \frac{1}{\alpha_1 + \alpha_2} \left( \alpha_1 \EV_{p_1} \log p^\pi + \alpha_2 \EV_{p_2} \log p^\pi \right) \label{eq:second_part_step}
    \end{align}
    By combining  Eq. \ref{eq:first_part_step} and \ref{eq:second_part_step}, we obtain:
    \begin{align}
        \F(p^\pi) &= \alpha_1 \EV_{p_1} \log p_1 + \alpha_2 \EV_{p2} \log p_2 - (\alpha_1 + \alpha_2) \EV_{\Bar{p}}\log p^\pi
    \end{align}
    Therefore,
    \begin{align}
        \argmin_\pi \F(p^\pi) &= \argmin_\pi \underbrace{\alpha_1 \EV_{p_1} \log p_1 + \alpha_2 \EV_{p_2} \log p_2}_{\text{constant}} - (\alpha_1 + \alpha_2) \EV_{\Bar{p}} \log p^\pi \\
        &= \argmin_\pi  - (\alpha_1 + \alpha_2) \EV_{\Bar{p}} \log p^\pi \\
        &= \argmin_\pi - (\alpha_1 + \alpha_2) \EV_{\Bar{p}} \log p^\pi + \underbrace{(\alpha_1 + \alpha_2) \EV_{\Bar{p}} \log \Bar{p}}_{\text{constant}}\\
        &= \argmin_\pi (\alpha_1 + \alpha_2) D_{KL}(\Bar{p} \| p^\pi)\\
    \end{align}
    Which concludes the proof.
\end{proof}

%% file: appendices/alg_implementation.tex
In the following, we provide an example of detailed implementations for \RewardGuidedFineTuningSolver employed in Sec. \ref{sec:algorithm} by \AlgNameLong, as well as \RewardGuidedFineTuningSolverRunningCosts, leveraged in Sec. \ref{sec:running_cost_KL} to scalably implement the AND operator. While the oracle implementation we report for completeness for \RewardGuidedFineTuningSolver corresponds to classic Adjoint Matching (AM) ~\citep{domingo2024adjoint}, the one for \RewardGuidedFineTuningSolverRunningCosts trivially extends AM base implementation to account for the running cost terms introduced in Eq. \ref{eq:running_cost_term}.

\subsection{Implementation of \RewardGuidedFineTuningSolver}

Before detailing the implementations, we briefly fix notation. Both algorithms explicitly rely on the interpolant schedules $\kappa_t$ and $\omega_t$ from \eqref{eq:flow_diff_eq}. In the flow-model literature, these are more commonly denoted $\alpha_t$ and $\beta_t$. We write $u^\textrm{pre}$ for the velocity field induced by the pre-trained policy $\pi^\textrm{pre}$, and $u^\textrm{fine}$ for the velocity field induced by the fine-tuned policy. In essence, each algorithm first draws trajectories and then uses them to approximate the solution of a surrogate ODE; its marginals serve as regression targets for the control policy~\citep[Section 5][]{domingo2024adjoint}.

\begin{algorithm}[H]
\caption{\RewardGuidedFineTuningSolverRunningCosts via AM}
\begin{algorithmic}[1]
\Require Pre-trained FM velocity field $u^{\text{pre}}$, step size $h$, number of fine-tuning iterations $N$, {gradient of reward $\nabla r$}, fine-tuning strength $\eta_k$
\State Initialize fine-tuned vector fields: $u^{\text{finetune}} = u^{\text{pre}}$ with parameters $\theta$.
\For{$n \in \{0, \ldots, N-1\}$}
  \State Sample $m$ trajectories $\bm{X} = (X_t)_{t \in \{0,\ldots,1\}}$ with memoryless noise schedule:
  \begin{equation}
      \sigma(t) = \sqrt{2 \kappa_t \!\left(\tfrac{\dot\omega_t}{\omega_t}\kappa_t - \dot\kappa_t\right)}
  \end{equation}
  \State{i.e.,:}
  \begin{equation}
    X_{t+h} = X_t + h\Big( 2 u_\theta^{\text{finetune}}(X_t, t) - \tfrac{\dot\omega_t}{\omega_t}X_t \Big) 
      + \sqrt{h}\,\sigma(t)\,\varepsilon_t, 
      \quad \varepsilon_t \sim \mathcal{N}(0,I), \quad X_0 \sim \mathcal{N}(0,I). \tag{51}
  \end{equation}
  \State For each trajectory, solve the \emph{lean adjoint ODE} backwards in time from $t=1$ to $0$, e.g.:
  \begin{equation}
    \tilde{a}_{t-h} = \tilde{a}_t + h\,\tilde{a}_t^\top \nabla_{X_t} 
    \Big( 2 v^{\text{base}}(X_t, t) - \tfrac{\dot\omega_t}{\omega_t}X_t \Big),
    \quad \tilde{a}_1 = \eta_k \nabla r(X_1). \tag{52}
  \end{equation}
  \State Note that $X_t$ and $\tilde{a}_t$ should be computed without gradients, i.e.,
    \begin{align}
        X_t &= \texttt{stopgrad}(X_t)\\
       \tilde{a}_t &= \texttt{stopgrad}(\tilde{a}_t)
    \end{align}
  \State For each trajectory, compute the following Adjoint Matching objective:
  \begin{equation}
    \mathcal{L}_{\text{Adj-Match}}(\theta) 
      = \sum_{t \in \{0,\ldots,1-h\}} 
      \left\| \tfrac{2}{\sigma(t)} \Big(v_\theta^{\text{finetune}}(X_t, t) - u^{\text{base}}(X_t, t)\Big) 
        + \sigma(t)\,\tilde{a}_t \right\|^2. \tag{53}
  \end{equation}
  \State Compute the gradient $\nabla_\theta \mathcal{L}(\theta)$ and update $\theta$ using favorite gradient descent algorithm.
\EndFor
\label{alg:noised_space_expansion}
\end{algorithmic}
\textbf{Output:} Fine-tuned vector field $v^{\text{finetune}}$
\end{algorithm}

\subsection{Implementation of \RewardGuidedFineTuningSolverRunningCosts}
The following \RewardGuidedFineTuningSolverRunningCosts is algorithmically identical to \RewardGuidedFineTuningSolverRunningCosts, with the only difference that the lean adjoint computation now integrates a running-cost term $f_t$, defined as follows (see Sec. \ref{sec:running_cost_KL}):
\begin{equation}
    f_t(x) \coloneqq \ \delta \left(\sum_{i=1}^n \alpha_i \, D_{KL}(p_t^\pi \,\|\, p_t^{pre,i})\right)(x,t), \quad t \in [0,1)
\end{equation}

\begin{algorithm}[H]
\caption{\RewardGuidedFineTuningSolverRunningCosts via AM with running costs}
\begin{algorithmic}[1]
\Require Pre-trained FM velocity field $v^{\text{base}}$, step size $h$, number of fine-tuning iterations $N$,  $f_t = \nabla  \delta \mathcal{G}_t(p^{\pi^k}_t)$, weight $\gamma_k$, weight schedule $\lambda$
\State Initialize fine-tuned vector fields: $v^{\text{finetune}} = v^{\text{base}}$ with parameters $\theta$.
\For{$n \in \{0, \ldots, N-1\}$}
  \State Sample $m$ trajectories $\bm{X} = (X_t)_{t \in \{0,\ldots,1\}}$ with memoryless noise schedule:
  \begin{equation}
      \sigma(t) = \sqrt{2 \kappa_t \!\left(\tfrac{\dot\omega_t}{\omega_t}\kappa_t - \dot\kappa_t\right)}
  \end{equation}
  \State{i.e.,:}
  \begin{equation}
    X_{t+h} = X_t + h\Big( 2 v_\theta^{\text{finetune}}(X_t, t) - \tfrac{\dot\omega_t}{\omega_t}X_t \Big) 
      + \sqrt{h}\,\sigma(t)\,\varepsilon_t, 
      \quad \varepsilon_t \sim \mathcal{N}(0,I), \quad X_0 \sim \mathcal{N}(0,I). \tag{40}
  \end{equation}
  \State For each trajectory, solve the \emph{lean adjoint ODE} backwards in time from $t=1$ to $0$, e.g.:
  \begin{align}
    \tilde{a}_{t-h} &= \tilde{a}_t + h\,\tilde{a}_t^\top \nabla_{X_t} 
    \Big( 2 v^{\text{base}}(X_t, t) - \tfrac{\dot\omega_t}{\omega_t}X_t  
    \Big)  - h\gamma_k \lambda_t f_t(X_t)\\
    \tilde{a}_1 &= -\gamma_k \lambda_1\nabla_{X_1} \delta\mathcal{G}_{1}(p^{\pi^k}_1)(X_1). \tag{41}
  \end{align}
  \State Note that $X_t$ and $\tilde{a}_t$ should be computed without gradients, i.e.,
  \begin{align}
        X_t &= \texttt{stopgrad}(X_t)\\
       \tilde{a}_t &= \texttt{stopgrad}(\tilde{a}_t)
    \end{align}
  \State For each trajectory, compute the Adjoint Matching objective:
  \begin{equation}
    \mathcal{L}_{\text{Adj-Match}}(\theta) 
      = \sum_{t \in \{0,\ldots,1-h\}} 
      \left\| \tfrac{2}{\sigma(t)} \Big(v_\theta^{\text{finetune}}(X_t, t) - v^{\text{base}}(X_t, t)\Big) 
        + \sigma(t)\,\tilde{a}_t \right\|^2. \tag{}
  \end{equation}
  \State Compute the gradient $\nabla_\theta \mathcal{L}(\theta)$ and update $\theta$ using a gradient descent step
\EndFor
\label{alg:fine_tuning_oracle}
\end{algorithmic}
\textbf{Output:} Fine-tuned vector field $u^{\text{finetune}}$
\end{algorithm}

%% file: appendices/app_computational_cost.tex
Reward-Guided Flow Merging (\AlgNameShort, see Alg. 1) is a sequential fine-tuning scheme which, at each of the ($K$) outer iterations, calls a reward-guided fine-tuning oracle such as \RewardGuidedFineTuningSolver (see Apx. \ref{sec:app_alg_implementation}). In practice, each oracle call performs ($N$) gradient steps of Adjoint Matching (see Apx. \ref{sec:app_alg_implementation}). At first sight, this suggests that the computational complexity of \AlgNameShort scales linearly in $K$ with respect to a standard fine-tuning run with ($N$) steps. However, this worst-case view does not fully capture the practical computational cost. We highlight two observations.

\paragraph{Approximate fine-tuning oracle.}
First, RFM can operate reliably with a rather \emph{approximate fine-tuning oracle}, i.e., with relatively small values of ($N$). We evaluate this phenomenon by replicating the objective curve of Fig. \ref{fig:toy_top_d} with same parameters and setting, for three different configurations of $(K,N)$ that keep the total budget ($K \cdot N = 300$) fixed but vary the outer (\ie $K$) and inner (\ie $N$) iteration counts:
\begin{itemize}
    \item $K = 10,; N = 30$
    \item $K = 15,; N = 20$ (as in Fig. \ref{fig:toy_top_d})
    \item $K = 30,; N = 10$
\end{itemize}

\newlength{\imgwl}
\begingroup
  \captionsetup[subfigure]{aboveskip=1.7pt, belowskip=0pt}
\setlength{\imgw}{0.25\textwidth}
\setlength{\imgwl}{0.33\textwidth}
\begin{figure*}[h]
    \centering
    \begin{subfigure}{\imgwl}
      \centering
      \includegraphics[width=\textwidth]{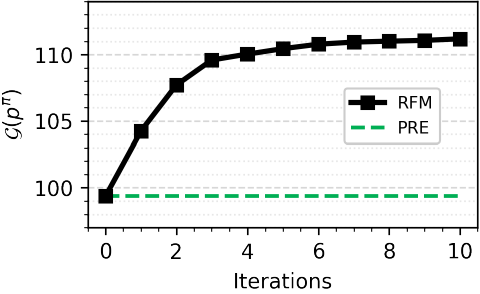}
      \caption{$K=10, N=30$}
      \label{fig:cost_a}
    \end{subfigure}%
    \begin{subfigure}{\imgwl}
      \centering
      \includegraphics[width=\textwidth]{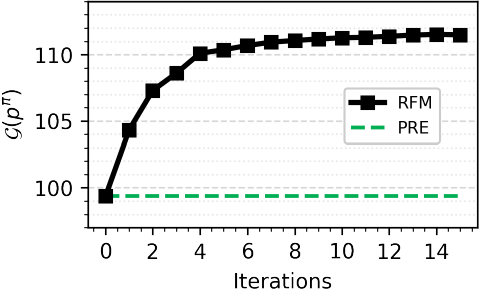}
      \caption{$K=15, N=20$}
      \label{fig:cost_b}
    \end{subfigure}%
    \begin{subfigure}{\imgwl}
      \centering
      \includegraphics[width=\textwidth]{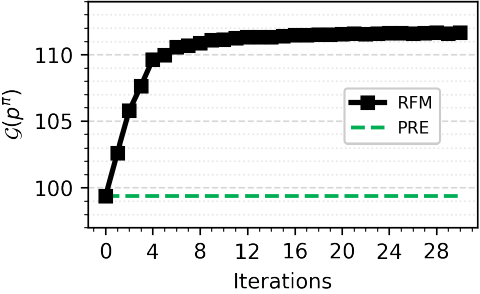}
      \caption{$K=30, N=10$}
      \label{fig:cost_c}
    \end{subfigure}%
    \vspace{0pt}
    \caption{\looseness-1 (left) \AlgNameShort run for reward-guided intersection with $K=10, N=30$, (center) \AlgNameShort run for reward-guided intersection with $K=15, N=20$, (right) \AlgNameShort run for reward-guided intersection with $K=30, N=10$.} \vspace{-0mm}
    \label{fig:experiments_cost}
\end{figure*}
\endgroup
\vspace{-0.5mm}

\looseness -1 The three corresponding curves are reported in Fig. \ref{fig:experiments_cost}. Empirically, all three settings achieve nearly identical final objective values, indicating that a more approximate oracle (smaller (N)) can be compensated by increasing the number of outer \AlgNameShort iterations ($K$), and vice versa, as long as the total optimization budget remains comparable. We observe a similar behaviour also on real-world, higher-dimensional, experiments (see Sec. \ref{sec:experiments} and Apx. \ref{sec:experimental_details}), where we values of $K$ vary from $K=1$ to $K=37$. 

\paragraph{$K/N$ Trade-off.}
Second, the runtimes of these configurations are of the same order. On our implementation, the runs with $((K,N) = (10,30), (15,20), (30,10))$ require approximately 1615 s, 1643 s, and 1870 s, respectively, showing a very light increase depending on $K$. This further supports the view that practitioners can trade off a cheaper but less accurate inner oracle (small ($N$)) against a slightly larger number of outer RFM steps (larger ($K$)), and vice versa, without incurring prohibitive additional cost. Since \AlgNameShort effectively solves a convex/non-convex optimization problem in probability space, we believe that classic convex optimization provides an interpretable framework for trading-off $N$ and $K$, by interpreting $N$ as the typical step-size, or learning rate, and $K$ as the typical number of gradient steps. Clearly, higher learning rates typically require less gradient steps and vice versa. Ultimately, one should notice that increasing $N$ does not directly imply better solution quality of the fine-tuning oracle, as it is the case for the oracle we employ within Sec. \ref{sec:experiments} (\ie Adjoint Matching \citep{domingo2024adjoint}), for which performance can degrade for excessively high values of $N$.

%% file: appendices/experimental_details.tex
\subsection{Illustrative Examples Experimental Details}
Numerical values in all plots shown within Sec. \ref{sec:experiments} are means computed over diverse runs of \AlgNameShort via $5$ different seeds. Error bars correspond to $95\%$ Confidence Intervals. 

\mypar{Shared experimental setup}
For all illustrative experiments we utilize Adjoint Matching (AM) [14 ]
for the entropy-regularized fine-tuning solver in Algorithm 1. Moreover, the stochastic gradient steps
within the AM scheme are performed via an Adam optimizer.

\mypar{Intersection Operator}
The balanced plot (see Fig. \ref{fig:toy_top_b} is obtained by running \AlgNameShort with $\alpha = [0.1, 0.1]$, for $K=80$ iterations, $\gamma_k = 28$, and $\lambda_t = 0.2$ for $t > 1 - 0.05$, and $\lambda_t = 0.4$ otherwise. 

For the balanced, reward-guided case in Fig. \ref{fig:toy_top_c}, we consider a reward function that is maximized by increasing the $x_2$ coordinate. We run \AlgNameShort with $\alpha = [0.1, 0.1]$, for $K=15$ iterations, $\gamma_k = 1.2$, and $\lambda_t = 0.2$ for $t > 1 - 0.05$, and $\lambda_t = 0.4$ otherwise. 

\mypar{Union Operator}

In both cases, we learn a critic via standard f-GAN~\citep{nowozin2016f} with $300$ gradient steps at each iteration $k \in [K]$ and continually fine-tune the same critic over subsequent iterations. For critic learning, we use a learning rate of $5\exp(-5)$.

For the balanced case, in Fig. \ref{fig:toy_mid_b}, we run \AlgNameShort with $\alpha = [1.0, 1.0]$. We use $K=13$ iterations, $\gamma_k = 0.001$.

For the unbalanced case in Fig. \ref{fig:toy_mid_c}, we run \AlgNameShort with $\alpha = [0.2, 1.8]$. Notice that up to normalization this is equivalent to $[0.1, 0.9]$ as reported in Fig. \ref{fig:toy_mid_c} for the sake of interpretability. We use $K=13$ iterations, $\gamma_k = 0.001$.

\mypar{Interpolation Operator}
In both cases, we learn a critic via standard f-GAN~\citep{nowozin2016f} with $800$ gradient steps at each iteration $k \in [K]$ and continually fine-tune the same critic over subsequent iterations. For critic learning, we use a learning rate of $1\exp(-5)$, and gradient penalty of $10.0$ to enforce $1$-Lip. of the learned critic.

For the case where $\pi^{init} \coloneqq \pi^{pre,1}$ (\ie left pre-trained model), in Fig. \ref{fig:toy_bottom_b}, we run \AlgNameShort with $\alpha = [1.0, 1.0]$. We use $K=6$ iterations, $\gamma_k = 1.0$.

For the case where $\pi^{init} \coloneqq \pi^{pre,2}$ (\ie right pre-trained model), in Fig. \ref{fig:toy_bottom_c}, we run \AlgNameShort with $\alpha = [1.0, 1.0]$. We use $K=6$ iterations, $\gamma_k = 1.0$.

\mypar{Complex Logic Expressions via Generative Circuits}
Pre-trained flows $\pi_1$ and $\pi_2$, as well as $\pi_1$ and $\pi_2$ are intersected via \AlgNameShort with $\gamma_k = 1$, for $K=20$, and $\lambda_t = 0.1$. The union operator is implemented with $K=30$, $\gamma_k = 0.0009$, $300$ critic steps and learning rate $5\exp(-5)$.

\subsection{Molecular Design Case Study}
Our base model FlowMol2 CTMC (\ie PRE-1) ~\citep{dunn2024mixedcontinuouscategoricalflow} is pretrained on the GEOM-Drugs dataset ~\citep{axelrod2022geom}.
We obtain our second model (\ie PRE-2) by finetuning PRE-1 with AM~\citep{domingo2024adjoint} to generate poses with lower single point total energy wrt. the continuous atomic positions as calculated with dxtb at the GFN1-xTB level of theory \cite{friede2024dxtb}. We then run \AlgNameShort with $K=50$, $\gamma=0.001$ for the balanced flow merging, and $K=20$, $\gamma=0.005$ to obtain the unbalanced flow merging. For reward-guided flow merging (RFM-RG), we set $\gamma=0.1$ and obtain the best model after $K=11$. All models start from PRE-1, \ie $\pi^{init}=\pi^{pre,1}$. All results for merging pre-trained models on GEOM can be found in Table \ref{tab:mol_exp_stats}. 
Running RFM-RG with $\alpha=3$ and $\gamma=0.001$, we obtain a model after $K=35$ that keeps the validity of its base models while implementing the reward-guided intersection.
We note that beyond validity, a critical step towards practical application will be to integrate molecular stability and synthesizability. Our \AlgNameShort formulation straightforwardly supports these extensions in the reward functional, and we leave their implementation to future work. 
\begin{table}[h]
    \centering
    \vspace{0.2cm}
    
    \begin{tabular}{lcc}
        \toprule
        & \textbf{Mean total energy} & \textbf{Mean validity} \\
        \textbf{Model} & \small{[Ha]} & \small{[\%]} \\
        \midrule
        
        PRE-1 & $-8.09 \pm 0.31$ & $76.44 \pm 1.7$ \\
        
        RFM-B & $-10.95 \pm 0.28$ & $74.34 \pm 0.9$ \\
        
        RFM-RG  & $-12.85 \pm 0.16$ &  $74.02 \pm 1.18$ \\
        
        RFM-UB & $-13.69 \pm 0.28$ & $72.78 \pm 0.4$ \\
        
        PRE-2 & $-14.76 \pm 0.29$ & $68.04 \pm 0.8$ \\
        
        \bottomrule
    \end{tabular}
    \caption{\label{tab:mol_exp_stats} Mean total energy and validity with standard deviation, averaged over 5 different seeds. \\ 
    \small{Suffixes: B - balanced ; UB - unbalanced; RG - reward-guided flow merging}}
\end{table}
For our second case-study - the OR operator - we use FlowMol2 CTMC pre-trained on QM9 \citep{ramakrishnan2014quantum}. 

We limit dimensionality to reduce the problem complexity by sampling 10 atoms per molecule, and run \AlgNameShort with $\gamma=100, K=37$. In particular Figure \ref{fig:union_qm9} shows that the estimated mean of the model $\pi^*$ obtained via \AlgNameShort matches the average total energy of $\pi^{pre,1}$ and $\pi^{pre,2}$ as predicted by the closed-form solution for the union operator presented in Sec. \ref{sec:problem_setting}. In Fig. \ref{fig:union_qm9}, OR denotes the final policy $\pi^*$ returned by \AlgNameShort.
\begin{figure}[H]
  \centering
  \includegraphics[width=0.3\textwidth]{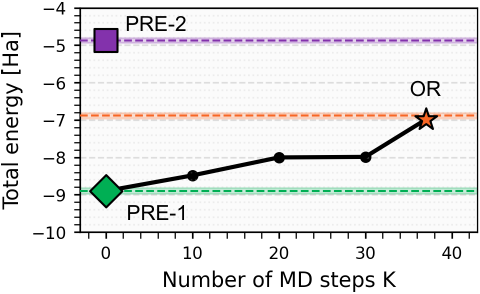}
  \caption{\looseness -1 Union on QM9}
  \label{fig:union_qm9}
\end{figure}

\subsection{Conformer Generation Case Study}
\label{apx:conformer}
We finetune the GEOM-QM9 pre-trained ETFlow model (denoted PRE-1) with AM on the molecular system \texttt{C\#C[C@H](C=O)CCC} to obtain PRE-2, using the same total energy objective as in the molecular design case study.
This is also the molecular system we perform our evaluations on.
For the subsequent merging experiments, we choose the lower-energy PRE-2 as the base model, \ie $\pi^{init}=\pi^{pre,2}$.
Balanced merging is performed with $\alpha_1=\alpha_2=1$, $\gamma=0.025$ and $K=6$.
The unbalanced merging is run with $\alpha_1=0.7$ and $\alpha_2=0.3$ and we take the model after $K=8$ steps with $\gamma=5e-5$.
The reward-guided merging model was obtained with $\gamma = 0.025$ after $K=6$, and the union model after $K=1$ with $\gamma = 1e-3$ and critics with the same GNN backbone as ETFlow. We show all results for the conformer generation case study in Tab.  \ref{tab:model_errors_conf}

\begin{table}[h]{
    \centering
    \vspace{0.2cm}
    \begin{tabular}{lcccc}
        \toprule
        & \textbf{$E$} & \textbf{$\mu$} & \textbf{$\Delta \epsilon$} & \textbf{$E_{min}$} \\
        \textbf{Model} & \small{[kcal/mol]} & \small{[debye]} & \small{[kcal/mol]} & \small{[kcal/mol]} \\
        \midrule
        
        PRE-1       & $0.3385 \pm 0.0002$ & $0.1679 \pm 0.0002$ & $0.5373 \pm 0.0019$ & $0.2793$ \\
        
        RFM-UB      & $0.3412$            & $0.1512$            & $0.5173$            & $0.2778$ \\
        
        RFM-B       & $0.3356 \pm 0.0001$ & $0.1503 \pm 0.0002$ & $0.4915 \pm 0.0014$ & $0.2782$ \\
        
        RFM-UNION   & $0.3352$            & $0.1467$            & $0.5033$            & $0.2761$ \\
        
        RFM-RG      & $0.3193 \pm 0.0003$ & $0.1141 \pm 0.0002$ & $0.4849 \pm 0.0015$ & $0.2777 \pm 0.0008$ \\
        
        PRE-2       & $0.3175 \pm 0.0006$ & $0.1268 \pm 0.0006$ & $0.4819 \pm 0.0010$ & $0.2761 \pm 0.0027$ \\
        
        \bottomrule
    
    \end{tabular}
    \caption{\label{tab:model_errors_conf}{Median Absolute Errors for energy $E$, dipole moment $\mu$, HOMO-LUMO gap $\Delta \epsilon$, and minimum energy $E_{min}$ across different models. We report mean and standard deviation over 5 different seeds.}}
    }
\end{table}

\section{Beyond Molecules: Reward-Guided Flow Merging of Pre-Trained Image Models}
\label{sec:image_experiments}
We further showcase the capabilities of \AlgNameLong on a small-scale, yet informative experiment for image generation. 
In the following, we consider pretrained CIFAR-10 image models~\citep{krizhevsky2009learning} and use the LAION aesthetics predictor V1~\citep{schuhmann2022laion} as a reward model.
Specifically, the aesthetics predictor was trained on a subset of the SAC dataset~\citep{SAC_data} with available ratings from $1$ (low preference / aesthetics) to $10$ (high preference). 
The goal of this case study is to show that \AlgNameShort can merge two models, PRE-1 and PRE-2, while optimizing the aesthetics score.
We perform reward-guided flow merging with PRE-2 as the base model, obtaining the model RFM-RG after $K=11$ iterations with $\gamma=1$ and $\alpha_i=1$. 
The numerical results in Tab. \ref{tab:image_exp_stats} show that \AlgNameShort can successfully intersect multiple prior flow image models while maximizing the aesthetic score. In particular, the fine-tuned model achieves a score of $3.64 \pm 0.53$ against $3.16 \pm 0.66$ and $3.23 \pm 0.58$ of PRE-1 and PRE-2 respectively.
We also report sample images of the discussed models in Fig. \ref{fig:cifar_images}.

\begin{table}[ht]
{
    \centering
    \vspace{0.2cm}
    
    \begin{tabular}{lc}
        \toprule
        \textbf{Model} & \textbf{Mean aesthetic score} \\ 
        \bottomrule

        PRE-1 & $ 3.16\pm 0.66$ \\

        PRE-2 & $ 3.23\pm 0.58$ \\ 

        RFM-RG & $3.64 \pm 0.53$ \\ 

        \bottomrule
    \end{tabular}
    \caption{\label{tab:image_exp_stats} \AlgNameShort can perform reward-guided (RG) intersections of pre-trained CIFAR-10 image models~\citep{krizhevsky2009learning}. We evaluate the resulting models in terms of mean aesthetic score (\ie the reward) over $1000$ samples, and report one std.}}
\end{table}

\begin{figure}[ht]
    \centering
    
    \begin{subfigure}{\linewidth}
        \centering
        \includegraphics[width=0.5\linewidth]{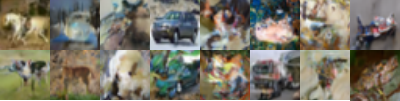}
        \caption{PRE-1}
    \end{subfigure}

    \vspace{0.2cm}
    
    \begin{subfigure}{\linewidth}
        \centering
        \includegraphics[width=0.5\linewidth]{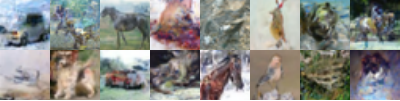}
        \caption{PRE-2}
    \end{subfigure}

    \vspace{0.2cm}

    \begin{subfigure}{\linewidth}
        \centering
        \includegraphics[width=0.5\linewidth]{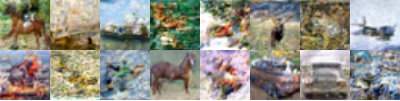}
        \caption{RFM-RG}
    \end{subfigure}

    \caption{Images generated by the two pre-trained flow models (\ie PRE-1, PRE-2), and by the flow model obtained via reward-guided intersection (\ie RFM-RG).}
    \label{fig:cifar_images}
\end{figure}